\newcommand{\REMOVED}[1]{ }
\newcommand{\todo}[1]{
  \textcolor{red}{\footnotesize \textsf{#1}}
}
\renewcommand{\todo}[1]{} 
\newcommand{\topic}[1]{
  \textcolor{Emerald}{\footnotesize \textsf{#1}}
}
\renewcommand{\topic}[1]{} 
\newcommand{\figDim}{1.0}
\newcommand{\figDimT}{1.0}
\newtheorem{theorem}{Theorem}[section]
\newtheorem{lemma}[theorem]{Lemma}
\newtheorem{corollary}[theorem]{Corollary}
\begin{document}
\title{\LARGE\bf Local Policies for Efficiently Patrolling\\ a Triangulated Region by a Robot Swarm}

\author{Daniela Maftuleac$^{1}$, Seoung Kyou Lee$^{2}$, S\'andor~P.~Fekete$^{3}$, Aditya Kumar Akash$^{4}$,\\ Alejandro L{\'o}pez-Ortiz$^{1}$, and James McLurkin$^{2}$
\thanks{$^{1}$Daniela Maftuleac and Alejandro L{\'o}pez-Ortiz at Cheriton School of Computing,
         University of Waterloo, Waterloo, ONT, Canada.
        {\tt\small dmaftule,alopez-o@cs.uwaterloo.ca}}%
\thanks{$^{2}$Seoung Kyou Lee and James McLurkin at Computer Science Department,
        Rice University, Houston, TX, USA.
        {\tt\small sl28,jmclurkin@rice.edu}}%
\thanks{$^{3}$S{\'a}ndor P. Fekete at Department of Computer Science, TU Braunschweig, Braunschweig, Germany.
        {\tt\small s.fekete@tu-bs.de}}%
\thanks{$^{4}$Aditya Kumar Akash at Department of Computer Science and Engineering, IIT Bombay, Mumbai, India.
{\tt\small adityakumarakash@gmail.com}}
}

\maketitle
\begin{abstract}
We present and analyze methods for patrolling an environment with a distributed swarm of robots.  Our approach uses a \emph{physical data structure}~--~a distributed triangulation of the workspace.  A large number of stationary ``mapping'' robots cover and triangulate the environment and a smaller number of mobile ``patrolling'' robots move amongst them.  The focus of this work is to develop, analyze, implement and compare local patrolling policies. We desire strategies that achieve full coverage, but also produce good coverage frequency and visitation times.  Policies that provide theoretical guarantees for these quantities have received some attention, but gaps have remained.

We present:
\textbf{1)} A summary of how to achieve coverage by building a triangulation of the workspace, and the ensuing properties.
\textbf{2)} A description of simple local policies (LRV, for {\em Least Recently Visited} and LFV, for {\em Least Frequently Visited}) for achieving coverage by the patrolling robots.
\textbf{3)} New analytical arguments why different versions of LRV may require worst-case exponential time between visits of triangles.
\textbf{4)} Analytical evidence that a local implementation of LFV on the {\em edges} of the dual graph is possible in
our scenario, and immensely better in the worst case.
\textbf{5)} Experimental and simulation validation for the practical usefulness of these policies, showing that even a small number of weak
robots with weak local information can greatly outperform a single, powerful robots with full information and computational capabilities.
\end{abstract}


\section{Introduction and Related Work}
\label{sec:Introduction}


\begin{figure}[t]
\centering
\includegraphics[width=.8\linewidth]{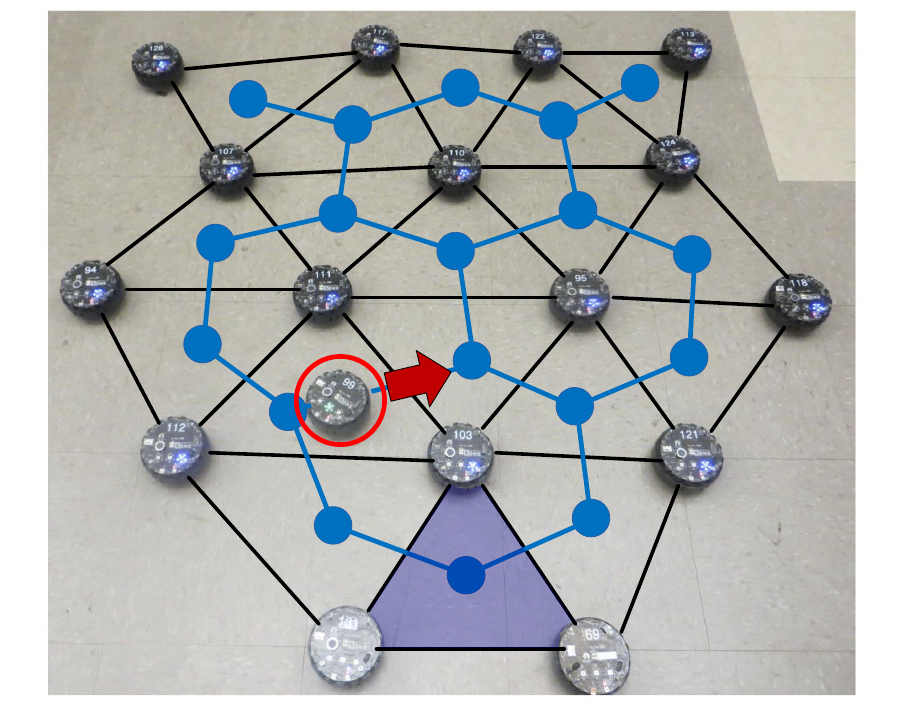}
\caption{
\label{fig:NavPatrol}
Triangulated network patrolling experiment with one patrolling robot (red circle). Blue
lines indicate the dual graph of the triangulation.  The blue triangle is the starting point for the patrol.  The patrol policy computes the next adjacent triangle a patrolling robot should visit.  It's output is shown with the red arrow near the patrolling robot.
}
\end{figure}

\topic{Introduction}
Large populations of robots are ideal for tasks where the robot must cover a
large geographic area, such as search-and-rescue, exploration, mapping and
surveillance.  The robots can maintain coverage of the environment after the
dispersion is complete.  The size of the environment that can be covered is
proportional to the population size, but large populations require that
individual robot be quite simple, without expensive sensors or computation. In
this paper, we focus on using a heterogeneous group of robots; with many small
``mapping'' robots that map the environment and build a communication network,
and a smaller (but still numerous) number of more capable ``patrolling'' robots
with the capability to respond to events.  After deployment of the mapping
robots, controlling the more powerful patrolling robots amounts to a
\emph{coverage control} problem: How should the navigating robots move in order
to ensure small worst-case latency in patrolling all areas of the surveyed
environment? Given the
distributed nature of a swarm, this requires simple local strategies that do
not involve complicated protocols or computations for coordinating the motion
of the mobile components,
while still achieving complete coverage, with small latency to surveyed locations.

\topic{The Problem}
In previous work~\cite{bfk+-tueur-13, flm+-prssr-14}, we showed how complete
coverage of an unknown region can be achieved by performing a structured
exploration by a multi-robot system with bearing-only low-resolution sensors.
The result is a triangulation of the workspace that can be exploited for
further tasks.  As described in \cite{lee2014} this supports a straightforward
approach to patrolling: Each triangle can be considered a vertex in a dual
graph, with adjacent triangles connected by dual edges. Thus, any route in the
workspace that visits a sequence of triangles can be traced by a path in the
dual graph.  This can significantly reduce the computation on the patrolling
robots~--~a simple policy that considers the current triangle and adjacent
triangles will suffice.  Given well-shaped triangles (which can be achieved by
the robot platform described in this paper), a policy can produce a patrol with
provable properties.  Fig.~\ref{fig:NavPatrol} shows an example experiment.

What local policies should be used for patrolling the triangulated region? A
natural choice for this task is {\em Least Recently Visited} (LRV), in which
each triangle keeps track of the time elapsed since its last visit from a
patrolling robot.  The patrolling robot policy directs it to move to the
adjacent triangle with the smallest such latency. This amounts to tracking the
visit times of triangles, which are the vertices in the dual graph, the blue
circles in Fig.~\ref{fig:NavPatrol}. We show that this yields a policy that
achieves complete coverage, and no obvious problems in practical
experiments\cite{lee2014}.

\topic{Contributions}
In this paper, we investigate the theoretical properties of patrolling
policies. We present new results that show using LRV on dual {\em vertices}
(i.e., LRV-v) can perform quite badly, by proving that the resulting coverage
time can be exponential in the number $n$ of dual vertices. This is a new
analytical result for our specific class of graphs that arise from planar
triangulations, i.e., planar graphs of maximum degree three. We present
alternate policies that share the simple local information requirements of
LRV-v, while achieving latencies that are small even in the worst case. The
policy {\em Least Frequently Visited} (LFV) that keeps track of the {\em
frequency} of visits is such a candidate. Making this policy provably good
requires a particular twist: instead of tracking visits to triangles, the dual
vertices, we track the use of edges between triangles, the dual edges.  We call
these two new policies  LFV-v and LFV-e respectively.
We augment the {\em physical data structure} of our triangulated network to
maintain these frequency counts, and show that this data can be stored and
retrieved by patrolling robots with fixed-size communication messages.  This
leads to latencies that are well bounded, even in the worst case: for regions
covered by $n$ triangles, the latency of LFV-e is not worse than $O(n\cdot d)$,
where $d$ is the dual diameter of the region, i.e., the largest number of
triangles that may have to be visited in a shortest path. For naturally shaped
regions with bounded aspect ratio, this amounts to an upper bound of
$O(n^{1.5})$.  We present simulation results and hardware experiments with
19 robots that demonstrate the efficacy of our approach.


\subsection{Assumptions}
We focus our attention on approaches applicable to small, low-cost devices with
limited sensors and capabilities.  In this work, we assume that robots do not
have a map of the environment, nor the ability to localize themselves relative
to the environment geometry, \emph{i.e.} SLAM-style mapping is beyond the
capabilities of our platform.  Instead, we assume that the mapping robots can disperse and triangulate the environment; see \cite{bfk+-tueur-13} for an illustrative video, and \cite{lee2014} for a technical paper. We exclude solutions that use centralized
control, as the communication and processing constraints do not allow these
approaches to scale to large populations.  We also do not assume that GPS
localization or external communication infrastructure is available, which are
limitations present in an unknown indoor environment.
We assume that the communication range is much smaller than the size of the environment, so a
multi-hop network is required for communication.  Finally, we assume that the
devices know the geometry of their local communications network.  This
\emph{local network geometry} provides each robot with relative pose
information about its neighbors.


\subsection{Related Work}
Our results rely on the computational power of many small robots distributed
throughout the environment, which support many basic algoritms.  The patrolling
robots use the mapping robot's network for navigation, there are many
references, we note that Batalin's approach is similar to our
own~\cite{batalin_usingsensor_2004}.  Our network is composed of triangles,
which provide useful geometric properties. Approaches like those of Spears et
al.~\cite{w._m._spears_distributed_2004} build a triangulated configuration
using potential fields, but the network does not have a physical data
structure, so the robots never recognize that they form triangles.  Our
approach allows us to use triangles as computational elements, which support
practical distributed computations\cite{lee2014}.  Geraerts~\cite{geraerts2010planning}
or Kallmann~\cite{kallmann2005path}, use a
triangulated environment for path planning, but require global information and
localization.  Our approach is fully distributed, using only local information
and communications.

Optimizing the refresh frequency when patrolling a graph even by a single,
powerful robot with full information amounts to finding a shortest roundtrip
that visits all vertices -- the well-known {\em Traveling Salesman Problem}
(TSP), which is known to be NP-hard, even for full information and central
control; see the book~\cite{applegate2006tsp} for a comprehensive study of solution methods,
and the book~\cite{cook2012tsp} for a recent overview of history and aspects of optimization.
Simple approximation algorithms for the TSP that require a limited amount of
computation do exist, for example, based on building a minimum spanning
tree~\cite{eak-mrapf-09}; however, this still requires global information or communication, even
with only local connections.  Moreover, the problem for multiple robots amounts
to the {\em Vehicle Routing Problem} (VRP), for which the additional task of finding
a well-balanced distribution of vertices visited by the individual robots
impedes performance guarantees for simple heuristics, such as doubling a
spanning tree.  This
contributes to making the VRP very challenging in practical contexts. See the book by Toth and
Vigo~\cite{toth2001vehicle} for a comprehensive overview.

Motivated by using a swarm of weak robots, rather than powerful centralized methods,
we favor approaches that require only local information, which can be
maintained by the dual vertices.  The most basic policy is to use a random
walk for each robot. As discussed in detail by Cooper et
al.~\cite{cik+-drwug-11}, this has some obvious disadvantages in the worst
case, and may even be of limited quality in the average case.  Better approaches consider available information, such as the time elapsed since the
last visit by a robot. This has been considered in the context of token passing
in decentralized ad-hoc networks. As Malpani et al.~\cite{mcv+-dtcma-05}
showed, the policy {\em Least Recently Visited} (LRV) ensures that finite
refresh times can be guaranteed. However, one of the theoretical results by
Cooper et al.~\cite{cik+-drwug-11}
was to show that there are examples for which this policy may
result in refresh times that are exponential in the size of the graph. More on this will be discussed in Section~\ref{sec:policy},
where we discuss LRV in the context of mobile robots.


\section{Model and Preliminaries}
\label{sec:ModelAndAssumptions}

We have a system of $p$ mapping robots and $r$ patrolling robots, where
$p \gg r$.  The communication network is an undirected graph $G=(V,E)$. Each
robot is modeled as a vertex, $u \in V$, where $V$ is the set of all robots and
$E$ is the set of all robot-to-robot communication links.
The neighbors of each vertex $u$ are the set of robots within line-of-sight communication range
$r_{max}$ of robot $u$, denoted $N(u)=\{v \in V\ \mid \{u,v\} \in E\}$. We
assume all network edges are also navigable paths.  Robot $u$ sits at the
origin of its local coordinate system, with the $\hat{x}$-axis aligned with its
current heading.   Robot $u$ can measure the relative pose of its neighbors in its reference frame.
We model algorithm execution as a series of synchronous \emph{rounds}.  This simplifies analysis and is straightforward to implement in a physical system~\cite{mclurkin_analysis_2008}.


\label{subsec:DualGraphNavigation}

The mapping robots use our MATP
triangulation algorithm~\cite{fkk+-etrsr-11, bfk+-tueur-13, lee2014}, to explore and triangulate the environment.  The exploration proceeds in a breadth-first
fashion, leaving a triangulated network in its wake.  Figure~\ref{fig:policy} shows an example triangulation of mapping robots.  This {\em primal graph} $G_P=(V_P,E_P)$
consists of the $p$ \emph{triangulation} robots, $V_P$, constructing the network, and the edges, $E_P$, forming the triangles. Pairs of triangulation robots with $\{v_i, v_j\}\in E_P$
are able to communicate with each other, while any mobile robot $r$ inside a triangle formed by robots $v_i$, $v_j$, $v_k$ can communicate with all three of them.
In turn, the set $V_D$ of triangles forms the vertices of a {\em dual graph} $G_D=(V_D,E_D)$, in which two vertices $\Delta_i,\Delta_j\in V_D$ are connected by an edge
if the triangles represented by $\Delta_i$ and $\Delta_j$ are adjacent. The triangulation algorithm guarantees that a dual edge $\{\Delta_i,\Delta_j\}\in E_D$ corresponds to the primal edge $\{v_i, v_j\}\in E_P$, where $v_i$ is the {\em owner} of triangle $\Delta_i$.  This allows us to model computation on $\Delta_i$ and communication between $\Delta_i$ and $\Delta_j$ while the actual computation and communication is on $v_i$ and $\{v_i, v_j\}\in E_P$.  See \cite{lee2014} for details.

\section{Local Patrolling Policies}
\label{sec:policy}

\subsection{Maintaining the Dual Graph}
In the following section we focus on the dual graph, and consider the triangles as computational elements.
However, computation in triangles actually occurs on the primal vertices (the robots).  A crucial property is the following, established in~\cite{lee2014}.

\begin{theorem}
\label{ConnectedOwners}
The owners of two adjacent triangles must also be connected.
\end{theorem}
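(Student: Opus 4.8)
The plan is to reduce the claim to a single structural fact about how the MATP algorithm assigns owners, and then finish with an elementary case analysis on a shared edge. First I would recall from the breadth-first construction in \cite{lee2014} that, apart from the initial \emph{root} triangle, every triangle $\Delta$ is created by crossing exactly one edge of an already-existing triangle; call this its \emph{creation edge} $e(\Delta)$, and recall that the robot placed to complete $\Delta$ — equivalently, the vertex of $\Delta$ that is not an endpoint of $e(\Delta)$ — is designated as the owner of $\Delta$. Thus, for every non-root triangle, the owner is the vertex \emph{opposite} its creation edge, and since each frontier robot completes exactly one triangle, distinct triangles receive distinct owners.

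The combinatorial heart of the argument concerns two adjacent triangles $\Delta_i,\Delta_j \in V_D$ sharing an edge $s$. I would first observe that $s$ can be the creation edge of at most one of them: if $s = e(\Delta_i)$, then the triangle across $s$, namely $\Delta_j$, is the parent of $\Delta_i$ in the breadth-first tree, so by acyclicity of that tree $s \neq e(\Delta_j)$. Hence at least one of the two, say $\Delta_i$, satisfies $s \neq e(\Delta_i)$. Its owner $v_i$ is the vertex opposite $e(\Delta_i)$, and such a vertex is an endpoint of both of the \emph{other} two edges of $\Delta_i$; since $s$ is one of those other edges, $v_i$ is an endpoint of $s$. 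Therefore $v_i$ is a common vertex of $\Delta_i$ and $\Delta_j$.

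To conclude, since $v_i$ is a vertex of the triangle $\Delta_j$ and the three vertices of any triangle are pairwise joined by primal edges in $E_P$, the owner $v_i$ is adjacent to $v_j$ (and $v_i \neq v_j$ by distinctness of owners). This yields $\{v_i,v_j\}\in E_P$, which is exactly the asserted connection.

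The hard part will be the base case hidden in the phrase ``apart from the root triangle'': the root has no creation edge, so its owner is fixed by convention and need not lie on a prescribed shared edge, which would break the case analysis on an edge separating the root from a child created across that very edge. I would resolve this by invoking the MATP convention that the exploration is launched from the boundary of the region, so that the root's base edge is not shared with any triangle; the root's owner, being opposite that boundary edge, then lies on each of the root's two interior edges, and the argument above applies verbatim to every edge incident to the root. Pinning down this boundary/owner convention precisely against \cite{lee2014} is the step I would treat most carefully.
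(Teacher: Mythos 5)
The paper does not actually prove Theorem~\ref{ConnectedOwners}: it is stated as a property ``established in~\cite{lee2014}'' and used as a black box, so there is no in-paper argument to measure your proposal against. Judged on its own terms, your deduction is internally coherent: the observation that a shared edge $s$ can be the creation edge of at most one of the two triangles (by acyclicity of the breadth-first parent relation), so that at least one owner is the vertex opposite a \emph{different} edge and hence an endpoint of $s$ itself, does produce an owner that is a vertex of both triangles; pairwise adjacency of a triangle's vertices in $E_P$ then finishes the job. This is a clean reduction and very plausibly the intended mechanism behind the cited result.

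The caveat --- which you yourself flag --- is that everything hinges on the convention that the owner of a non-root triangle is the newly placed robot, i.e., the vertex opposite the creation edge. This paper never states that convention; it only declares $v_i$ to be ``the owner'' of $\Delta_i$ and asserts in the preliminaries that the dual edge $\{\Delta_i,\Delta_j\}$ corresponds to the primal edge $\{v_i,v_j\}$ (which is essentially the theorem restated, not a proof of it). If \cite{lee2014} assigns owners differently --- say, to an endpoint of the creation edge, or by robot ID --- your case analysis collapses. A second, smaller soft spot: the assertion that ``each frontier robot completes exactly one triangle'' need not hold, since a newly placed robot can close more than one triangle against the existing frontier, so distinctness of owners is not automatic; this is harmless, however, because $v_i=v_j$ makes the connectivity claim trivial. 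In short: no logical gap in the argument as structured, but its key premise about owner assignment is borrowed rather than proved, and verifying it against the cited reference is exactly the step you should treat as load-bearing.
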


Thus, a mobile robot in a triangle $\Delta_i$ can do status checks on neighboring triangles (i.e., neighbors
of $\Delta_i$ in the dual graph) by asking the owner
of $\Delta_i$ to query its neighbors in the primal graph.

In our new algorithms, it becomes important to consider local patrolling policies that are based
on keeping track of dual {\em edges} instead of vertices. This makes it necessary to store and retrieve
information on these dual edges. We can implement this by making the owner of a triangle also the owner
of the dual edges to all adjacent triangles.  It is important to observe that these owners of a dual edge
are {\em not} the vertices of the corresponding primal edge. Implementing this on the stationary robots forming the primal
graph can be based on the following result.

\begin{theorem}
\label{ConnectedEdges}
Any dual edge has two owners that are connected in the primal graph.
\end{theorem}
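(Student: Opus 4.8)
The plan is to reduce Theorem~\ref{ConnectedEdges} to the already-established Theorem~\ref{ConnectedOwners}. First I would make the notion of ``owner of a dual edge'' precise, following the construction in the preceding paragraph: we declare the owner $v_i$ of a triangle $\Delta_i$ to also own every dual edge incident to $\Delta_i$. A dual edge $e=\{\Delta_i,\Delta_j\}\in E_D$ is incident to exactly the two triangles $\Delta_i$ and $\Delta_j$, so the only robots claiming ownership of $e$ are $v_i$, the owner of $\Delta_i$, and $v_j$, the owner of $\Delta_j$. These are the two owners referred to in the statement, and the goal is to show that they are adjacent in $G_P$.

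Second, I would observe that, by definition of the dual graph $G_D$, the existence of the dual edge $e=\{\Delta_i,\Delta_j\}$ is exactly the assertion that the triangles $\Delta_i$ and $\Delta_j$ are adjacent. Hence $v_i$ and $v_j$ are the owners of two adjacent triangles, and Theorem~\ref{ConnectedOwners} applies verbatim, yielding $\{v_i,v_j\}\in E_P$. This is the entire connectivity argument: the primal link the theorem promises is precisely the one that Theorem~\ref{ConnectedOwners} guarantees for the owners of any adjacent pair of triangles.

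The point that deserves care — and, I suspect, the reason the result is stated separately rather than folded into Theorem~\ref{ConnectedOwners} — is the warning that these owners are \emph{not} the endpoints of the primal edge shared by $\Delta_i$ and $\Delta_j$. Geometrically the dual edge $e$ sits across the common side of the two triangles, which is itself some primal edge $\{a,b\}\in E_P$; but $v_i$ and $v_j$ are owners of the triangles and in general coincide with neither $a$ nor $b$. So the connecting edge furnished by the theorem is $\{v_i,v_j\}$, a genuinely different edge from the shared side $\{a,b\}$, and I would state this distinction explicitly so that the data-structure reading is unambiguous: the two robots that store and update the frequency count on $e$ are $v_i$ and $v_j$, and they can exchange this count directly over $\{v_i,v_j\}$.

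The main obstacle, accordingly, is not the connectivity step, which is immediate from Theorem~\ref{ConnectedOwners}, but fixing the ownership definition so that the two owners are genuinely $v_i$ and $v_j$ and keeping them distinct from $\{a,b\}$. I would also dispose of the degenerate possibility $v_i=v_j$: if the owner-assignment rule ever made a pair of adjacent triangles share an owner, the ``two owners'' collapse to one, which is trivially connected to itself; but I expect the breadth-first assignment of~\cite{lee2014} to keep the two owners distinct, and I would confirm this against that rule to justify the phrasing ``two owners.''
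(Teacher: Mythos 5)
Your proposal is correct and matches the paper's own argument, which is exactly the one-line reduction to Theorem~\ref{ConnectedOwners}: a dual edge involves two triangles, whose owners are adjacent in the primal graph. The extra care you take with the ownership definition and the distinction from the shared primal side $\{a,b\}$ is consistent with the paper's surrounding discussion, just spelled out more explicitly.
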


For a proof, observe that any dual edge involves two triangles, whose owners are adjacent by Theorem~\ref{ConnectedOwners}.
Moreover, it also follows that any primal vertex can only be the owner of a small number of dual edges, which bounds memory and communication usage. This allows us to model information stored in a distributed fashion on triangles and edges, and reason
about communication via the dual graph.

%
%

%
\begin{figure}
\centering
\includegraphics[width=.70\linewidth]{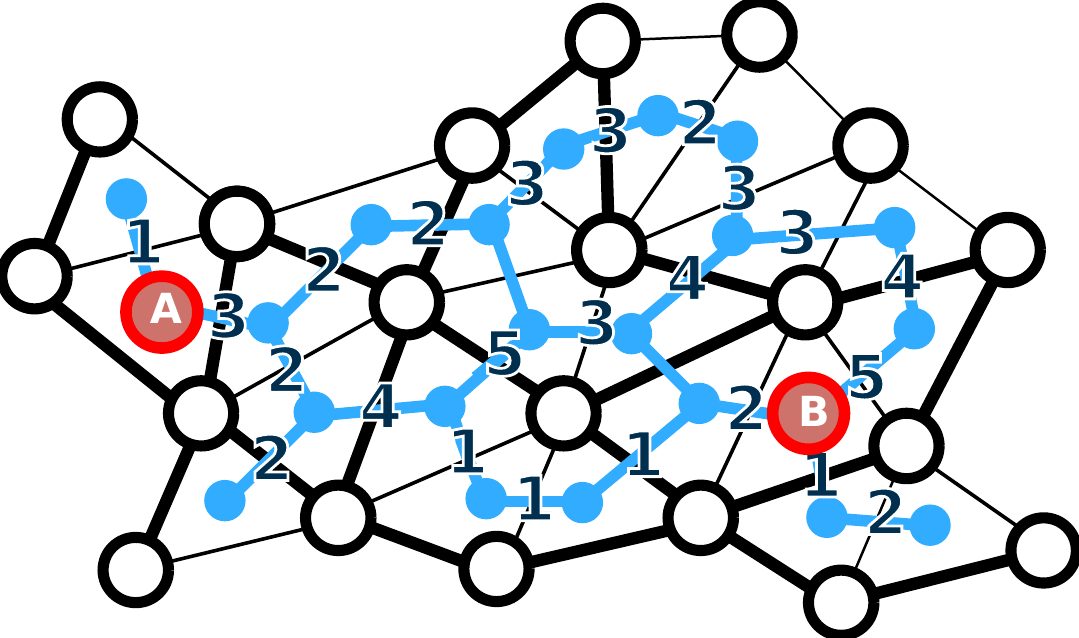}
\caption{
\label{fig:policy}
Notation for triangulation with a dual graph. {\bf Black circles:}triangulation
robots, {\bf red circles:}patrolling robots, {\bf blue lines} dual graph,
{\bf black lines:} primal graph edges that map to dual graph edges, {\bf thick black lines:} subset of dual edges that indicate connections
between owner vertices. The {\bf
black numbers} indicate the current data stored in each edge.  For example, in
LFV-e patrolling (Sec.~\ref{subsec:graphs}), this data would be the frequency
of visits for that edge.
}
\end{figure}

\subsection{Basic Policies}
\label{subsec:graphs}
We consider a number of different local policies that allow patrolling
all triangles of the environment. The objective is to minimize maximum latency, i.e., the longest refresh time ($RT$)
between consecutive visits of the same triangle.

Our previous work described a local patrolling policy that moves each patrolling robot into the adjacent triangle
with the largest refresh time.  We refer to this policy as LRV-v, for {\em least recently
visited vertex}).  This design makes sense, as the objective
is to keep refresh times small. While simple, this policy produces complete coverage~\cite{mcv+-dtcma-05}.

The LRV policy has been studied in various contexts.
Fig.~\ref{fig:expData} demonstrates that it exhibits relatively good behavior in
practice.  The experiment starts with one navigating
robot; we add others as time proceeds.  As we deploy more navigating
robots, the maximum $RT_t({\Delta_i})$ decreases, as expected.

An alternative to considering the {\em time} since the last visit to a triangle
is to keep track of the {\em frequency} of visiting. The rationale behind this is
that an even distribution of visits should make the maximum latency close to the
average. Thus, we obtain the policy LFV-v, for {\em least frequently visited vertex}.

Keeping track of visits to dual vertices (i.e., triangles) is natural,
but not the only possible choice. Instead, we can track visits to dual edges,
giving rise to the policies LRV-e and LFV-e. As it turns out, the crucial outcome of this paper
is that both LRV-v and LFV-e have worst-case exponential latency, so they should be
used only with care. While precise proofs on the worst-case behavior of LFV-v have yet
to be established, we present evidence that it may also be bad. On the positive side,
the worst-case behavior of LFV-e {\em can} be bounded, making it the unique policy
of choice when trying to bound worst-case behavior.

\subsection{Worst-Case behavior of LRV-e and LRV-v}

It is known that the worst-case behavior of LRV-e in arbitrary graphs can be
exponential in the number of nodes in the graph, provided we allow a maximum
degree of at least 4. That is, for every $n$ there exists a graph with $n$
vertices in which the largest refresh time for a node is
$\exp(\Theta(n))$~\cite{cik+-drwug-11}.  Fig.~\ref{fig:reg_graph} depicts one
such graph (with vertices of degree 4), which filters a fixed percentage (1/3 to be precise) of all
left-to-right paths that go past the diamond-like gadgets. If we connect
$\Theta(n)$ such gadgets in series, we will require a total of
$(3/2)^{\Theta(n)}$ paths,
starting from the left for at least one of them to reach the rightmost point in the series.

\begin{figure}
\centering
\includegraphics[width=0.34\textwidth]{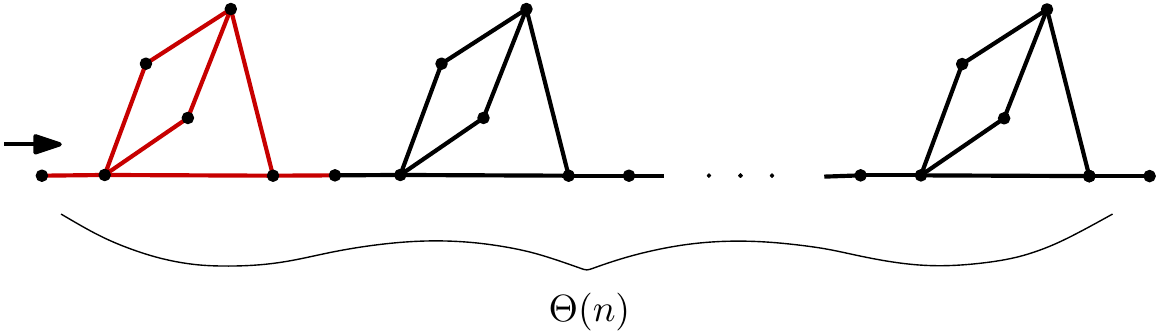}
\caption{Graph with $n$ vertices with a chain of $\Theta(n)$
gadgets. A single gadget is colored in red for illustration purposes. Patrolling takes exponential time in the worst case~\cite{cik+-drwug-11}.} \label{fig:reg_graph}
\end{figure}

Given that our scenario is based on visiting (dual) vertices, it is natural
to consider the worst-case behavior of LRV-v for the special class
of planar graphs of maximum degree 3 that can arise as duals of triangulations.
Until now, this has been an open problem. Moreover, it also makes sense to
consider the worst-case behavior of LRV-e for the same special graph class,
which is not covered by the work of Cooper et al.~\cite{cik+-drwug-11}.

\begin{theorem}
\label{th:lb.LRV-v}
There are dual graphs of triangulations (in particular, planar graphs with $n$ vertices of maximum degree 3), in which LRV-v
leads to a largest refresh time for a node that is exponential in $n$.
\end{theorem}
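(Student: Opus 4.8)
The plan is to mirror the chaining strategy behind the degree-$4$ construction of Cooper et al.\ (Fig.~\ref{fig:reg_graph}), but to replace their diamond gadget with a new gadget $H$ that is planar, has maximum degree $3$, and arises as the dual of a small triangulated patch. Each copy of $H$ will have a designated left port $\ell$ and right port $r$ (each of low degree inside the gadget, so that the glued vertices do not exceed degree $3$). The defining property I want $H$ to have is a \emph{filtering} behaviour under LRV-v: a robot that enters at $\ell$ must oscillate inside $H$ and be reflected back toward $\ell$ a constant fraction of the time before it is finally released at $r$. Concretely, I would design $H$ so that the number of times the robot passes through $\ell$ before it first reaches $r$ exceeds the corresponding count for the preceding gadget by a fixed factor $c>1$.

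First I would fix a deterministic LRV-v dynamics by making all initial visit times distinct (or by fixing a tie-breaking rule consistent with the geometry), so that the robot's trajectory is a well-defined sequence of dual vertices. Next I would chain $m=\Theta(n)$ copies $H_1,\dots,H_m$ by identifying the right port of $H_i$ with the left port of $H_{i+1}$, attaching a simple feeder structure on the far left that keeps pushing the robot rightward. The core of the argument is an inductive claim: the first time step at which the robot reaches the right port of $H_i$ is at least $c^{\,i}$. The base case is a direct analysis of a single gadget; the inductive step uses the filtering property of $H_i$ together with the fact that, each time the robot is reflected out of $H_i$ back to the left, it must re-traverse and reset the least-recently-visited structure of $H_{i-1},\dots,H_1$ before it can return and make another attempt.

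The main obstacle is establishing this filtering invariant rigorously, because LRV-v is deterministic and globally history-dependent: the order in which the robot leaves a gadget is governed by the relative last-visit times of that gadget's vertices, and those timestamps are perturbed every time the robot makes an excursion into a neighbouring gadget. I therefore expect the heart of the proof to be a joint invariant over the whole chain, describing the configuration by the induced ordering of last-visit times of all ports and showing that this ordering advances like a mixed-radix (essentially binary) counter, in which incrementing a high-order digit forces a full sweep through all lower-order digits. Designing a degree-$3$ gadget with exactly this counter semantics is more delicate than in the degree-$4$ case, since each vertex has one fewer edge through which to route the reflected and forwarded traffic; I anticipate needing a constant number of auxiliary vertices per gadget, while keeping this number $O(1)$ so that the full graph still has $n=\Theta(m)$ vertices.

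Finally I would verify realizability: that the glued graph is genuinely the dual of a planar triangulation (ideally of well-shaped triangles, matching the platform assumptions), so that the lower bound applies to the intended class and not merely to abstract degree-$3$ planar graphs. Combining the inductive bound $c^{\,m}$ with $n=\Theta(m)$ then yields a largest refresh time of $\exp(\Theta(n))$ for the rightmost triangle, as claimed.
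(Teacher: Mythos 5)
Your high-level strategy is the same one the paper uses: chain $\Theta(n)$ identical maximum-degree-3 gadgets so that each stage roughly doubles the time needed to push the robot one component further, giving a recursion of the form $T_i \geq c\,T_{i-1} + O(1)$ with $c>1$ and hence an $\exp(\Theta(n))$ refresh time for a vertex that is touched only at the start and end of a full sweep. The paper instantiates this with a concrete component (Fig.~\ref{triang-LRV}) for which it exhibits the two alternating LRV-v traversal patterns (Fig.~\ref{LRV}), counts $26$ edge visits per component per cycle, and reads off $T_n \geq 26 + 2T_{n-1}$, i.e.\ $T_n \geq 26\,(2^n-1)$.

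The gap is that you never actually produce the gadget or verify the dynamics on it. Everything that makes the theorem nontrivial --- designing a planar graph of maximum degree 3 that is realizable as the dual of a triangulation, and on which the deterministic, history-dependent LRV-v trajectory provably exhibits the ``reflected a constant fraction of the time'' behaviour --- is stated as an intention (``I would design $H$ so that\ldots'', ``I anticipate needing\ldots'') rather than carried out. You correctly identify the central difficulty: a joint invariant on the last-visit timestamps across the whole chain, behaving like a counter, with the degree-3 constraint making it harder to route reflected and forwarded traffic than in the degree-4 construction of Cooper et al. But naming the obstacle is not overcoming it. Without an explicit gadget, a fixed tie-breaking rule, and a verified description of the robot's path through one component (the analogue of the paper's two alternating paths and the visit count of $26$), the inductive claim ``the first arrival at the right port of $H_i$ takes at least $c^i$ steps'' has no base case and no verified inductive step, so the proposal as written does not establish the theorem.
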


\begin{proof}
Consider the graph $G_D$ in Fig.~\ref{triang-LRV} colored in blue, which contains $\Theta(n)$ identical components connected in a chain.
We prove the claimed exponential time bound by recursively calculating the time taken to complete one cycle in the transition
diagram shown in Fig.~\ref{LRV}.

\begin{figure}[t]\centering
\includegraphics[width=\linewidth]{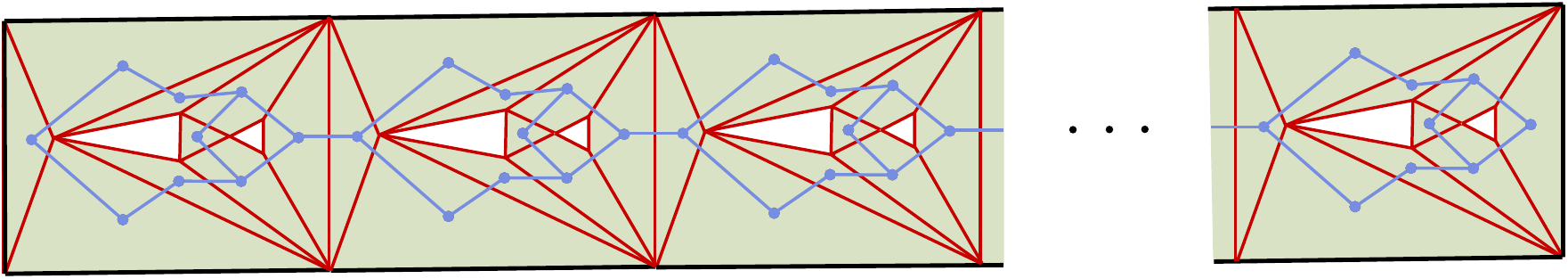}
\caption{A polygonal region with holes triangulated (red lines) with the dual graph $G_D$ of the triangulation (blue lines).} \label{triang-LRV}
 \end{figure}

\begin{figure}[h]\centering
 \includegraphics[width=0.4\textwidth]{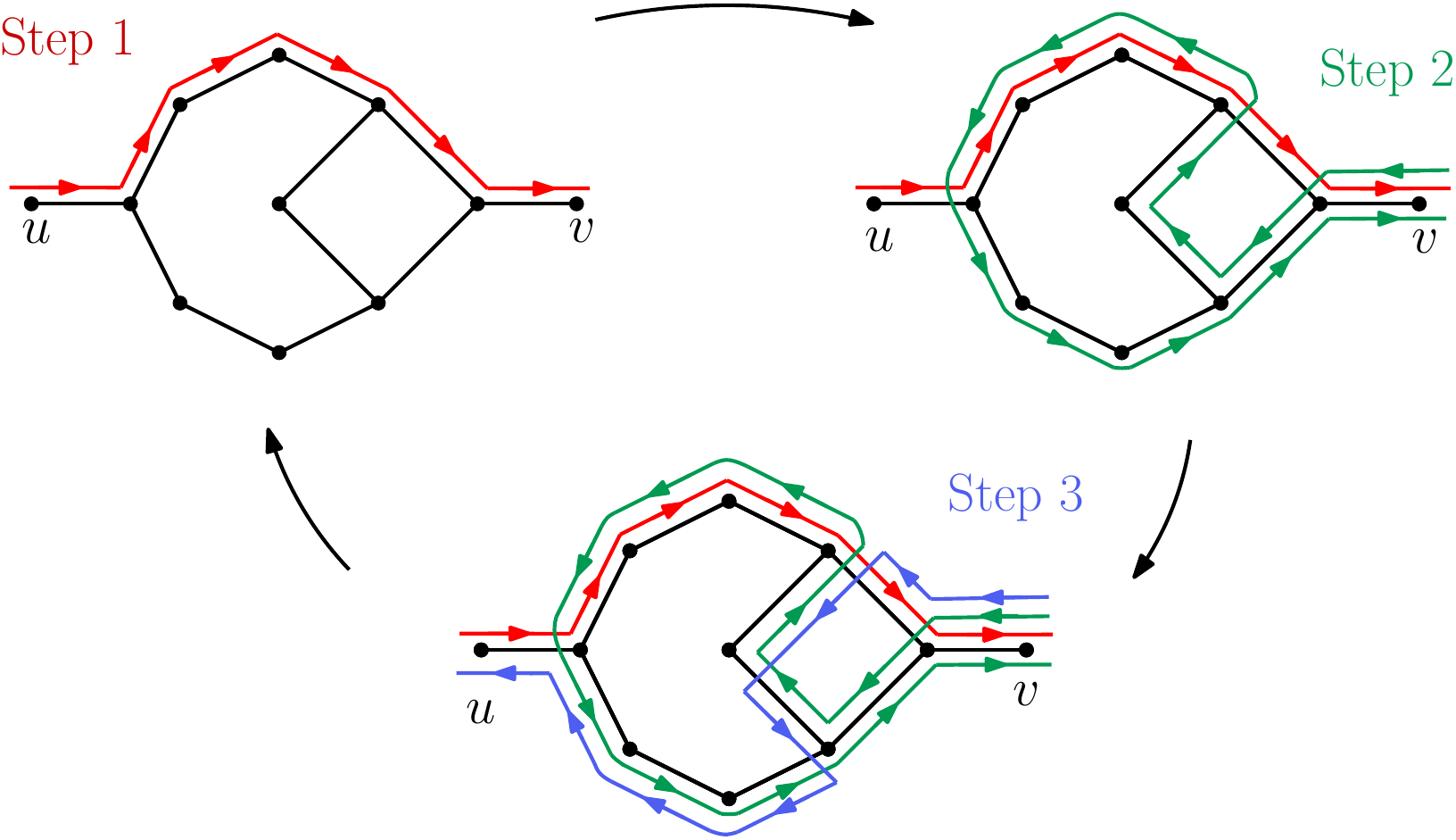}
\vspace*{2mm}
 \includegraphics[width=0.4\textwidth]{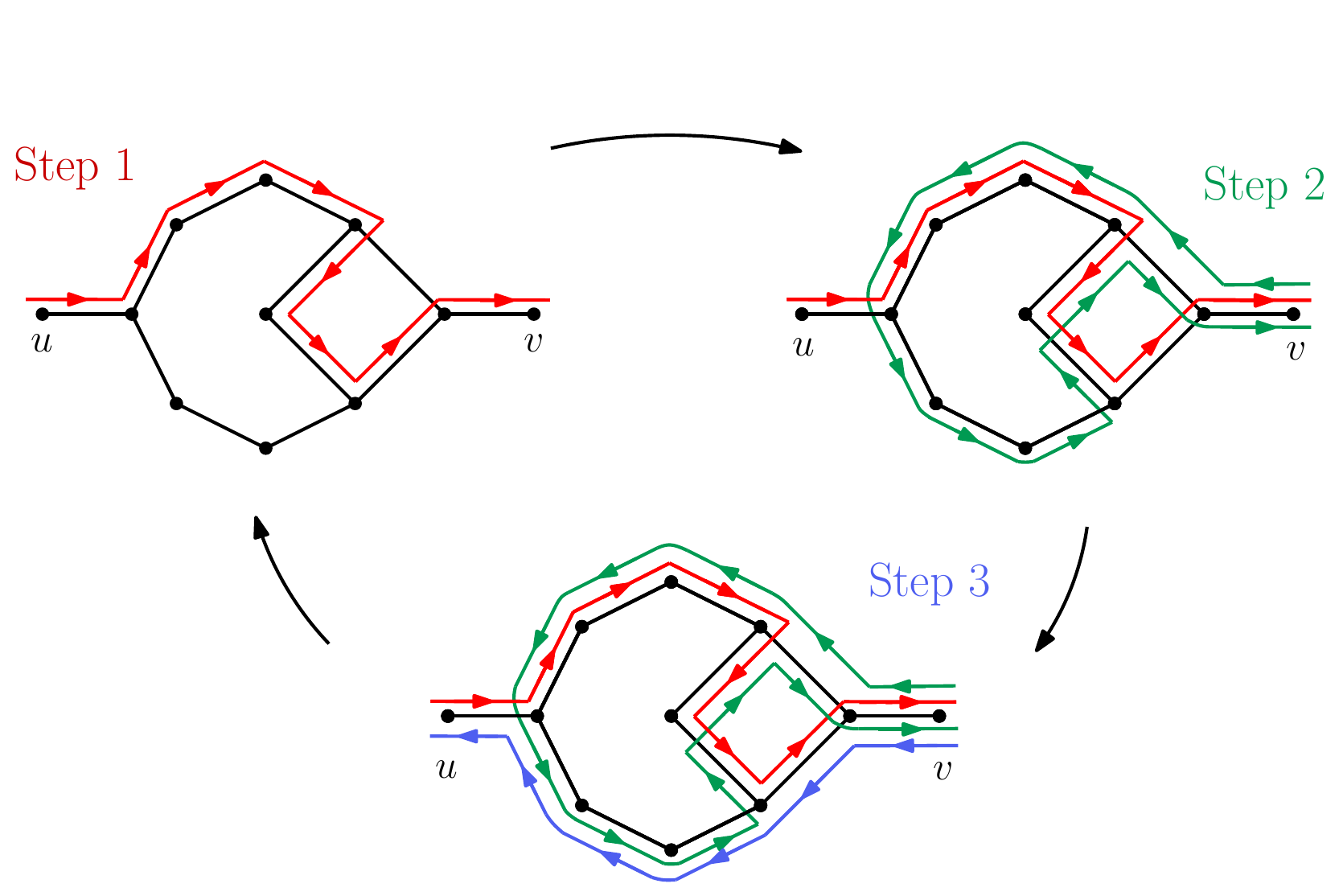}
\caption{Two possible alternating paths for the LRV-v strategy on each component of the graph $G_D$.} \label{LRV}
 \end{figure}

We monitor the movement of a robot from this situation onwards.
Let a robot take at least one unit of time to traverse an edge. Moreover, let $T_n$ denote the time taken to
complete one cycle of $G_D$, i.e., the time taken by a robot to start from and return to the first vertex of the first component of $G_D$.
From the possible paths illustrated in Fig.~\ref{LRV}, we can observe that the vertex $u$ is visited only during the beginning and
end of the cycle, while the vertex $v$ is visited twice in this cycle. It is not hard to check that the summation of visits to all edges
in one component during one cycle is 26. Using this we can see a simple recursion as follows:
$$T_n \geq 26 + 2 \cdot T_{n-1},\;T_0 = 0$$
Solving this equation, we get
$$T_n \geq 26 \cdot (2^n-1)$$
Utilizing the fact that vertex $u$ of the first component is visited only at the beginning and end of a cycle in $G_D$, we see that it is
visited after $T_n \geq 26 \ast (2^n - 1)$ units of time, which is exponential in the number $n$ of nodes of graph $G_D$, as claimed.
\end{proof}

As it turns out, the same negative result can be established for LRV-e, making both versions of LRV unsuitable for avoiding bad worst-case
behavior.

\begin{theorem}
\label{th:lb.LRV-e}
There are dual graphs of triangulations (in particular, planar graphs with $n$ vertices of maximum degree 3), in which LRV-e
leads to a largest refresh time for a node that is quadratic in $n$.
\end{theorem}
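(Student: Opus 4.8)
The plan is to mirror the proof of Theorem~\ref{th:lb.LRV-v}: I would reuse the chain-of-gadgets triangulation dual of Fig.~\ref{triang-LRV}, a series of $\Theta(n)$ congruent components $C_1,\dots,C_m$ linked by single bridge edges, and track the refresh time of one distinguished vertex $u$ in the first component $C_1$, but now under the edge-timestamp rule LRV-e. The graph is an admissible dual (it is the dual of a triangulated polygon with holes, planar and of maximum degree $3$), and by Theorem~\ref{ConnectedOwners} every adjacency the policy consults is backed by triangle ownership, so the per-edge timestamps can be stored and queried exactly as required.

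The core of the argument is to re-analyze this construction under LRV-e and to show that the factor-$2$ blow-up behind the exponential bound of Theorem~\ref{th:lb.LRV-v} disappears. Under LRV-v the robot was forced, on every re-entry into a component, to re-explore everything lying behind it, giving the multiplicative recursion $T_n \ge 26 + 2\,T_{n-1}$. LRV-e instead always leaves a vertex by its least recently \emph{traversed} incident edge, so a freshly crossed edge is never retaken until the other incident edges have caught up; this local balancing prevents any component from being fully re-run a constant fraction of the time. I would make this precise as an invariant on the relative order of the edge timestamps at each gadget vertex, from which it follows that, per super-cycle, LRV-e relays through the chain only $\Theta(1)$ times at each fixed depth while paying $\Theta(n)$ for the single deep round trip needed to reset the last component.

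This turns the recursion from multiplicative into additive. Writing $T_k$ for the time of one cycle over a chain of $k$ components, the invariant yields
\[ T_k \;\ge\; T_{k-1} + \Theta(k), \qquad T_0 = 0, \]
whose solution is $T_n = \sum_{k}\Theta(k) = \Theta(n^2)$. As in the LRV-v analysis, the distinguished vertex $u$ sits on a sub-path of $C_1$ that is traversed only at the very start and end of each super-cycle---the intermediate excursions turn around before reaching it, exactly the role played by the two alternating paths of Fig.~\ref{LRV}---so $u$ is refreshed once per super-cycle and its largest refresh time is $\Theta(n^2)$, quadratic in $n$, as claimed. I would close by checking that the tie-breaking convention and the transient start-up phase do not change this order of growth.

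The step I expect to be the main obstacle is the edge-timestamp invariant itself. Because LRV-e balances edge usage locally, the naive expectation is that no vertex can be starved beyond linear time; the construction must therefore arrange that the junction vertices guarding $u$ are themselves visited only rarely during the super-cycle, so that the guaranteed-but-infrequent use of $u$'s incident edges still leaves an $\Theta(n^2)$ gap. Proving this requires tracking the global ordering of all incident edge-traversal times through a complete period and showing that no shorter closed walk is consistent with the least-recently-used choice---the edge-based analogue of the local case analysis on vertex times in Fig.~\ref{LRV}, but considerably more delicate, since the decisive quantity is now a comparison of edge timestamps across the whole component rather than a single vertex's visit times.
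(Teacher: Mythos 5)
Your high-level plan---a chain of $\Theta(n)$ components in which each deeper component can only be reached by re-walking the prefix, so that the multiplicative recursion of Theorem~\ref{th:lb.LRV-v} becomes the additive one $T_k \ge T_{k-1}+\Theta(k)$ and hence $\Theta(n^2)$---is exactly the shape of the paper's argument. But the decisive step is missing, and you say so yourself: you never establish the edge-timestamp invariant that forces LRV-e to turn around at the right places, and that invariant is the entire content of the theorem. The paper does not prove such an invariant abstractly either; what it does instead is exhibit the concrete traversal pattern, and it does so on a different and simpler gadget than the one you chose. The paper reuses the chain of $4$-cycles of Fig.~\ref{triang} (the construction from Theorem~\ref{LFVe-lb}) together with the explicit alternating oriented paths of Fig.~\ref{pattern-LRVe}: the first traversal of a component reverses direction and returns to the start, and every later traversal passes straight through, so reaching the $i$th component requires re-crossing the first $i-1$. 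You instead reuse the heavier components of Fig.~\ref{triang-LRV}, which were engineered specifically to make LRV-v fail exponentially; nothing in your write-up verifies that LRV-e actually follows the claimed turn-around-then-push-deeper pattern on those components.

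A further caution about the invariant you intend to prove: it cannot follow generically from ``LRV-e balances edge usage locally,'' because the paper (citing Cooper et al.) records that LRV-e has \emph{exponential} refresh time on graphs of maximum degree $4$ (Fig.~\ref{fig:reg_graph}). Local balancing alone therefore does not cap re-traversal at $\Theta(1)$ relays per depth; whatever argument you give must exploit the specific degree-$3$ chain structure of the construction. The most economical repair is to follow the paper: take the $4$-cycle chain of Fig.~\ref{triang}, fix a tie-breaking rule, trace the first few steps of LRV-e explicitly, and check that the pattern of Fig.~\ref{pattern-LRVe} is self-sustaining. That yields directly that each visit to component $i$ costs $\Theta(i)$, summing to $\Theta(n^2)$ for the refresh time of the extremal node, which is the claimed bound.
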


\begin{proof}
This proof follows the exact arguments as the proof of theorem \ref{LFVe-lb}
for the same graph $G_D$ represented in Fig. \ref{triang}.
As illustrated in Fig. \ref{pattern-LRVe}, each of $G_D$'s components is traversed
initially following the colored oriented paths from step 1 and further
alternating the paths from step $2k$ and $2k+1$.
 \begin{figure}\centering
 \includegraphics[width=0.37\textwidth]{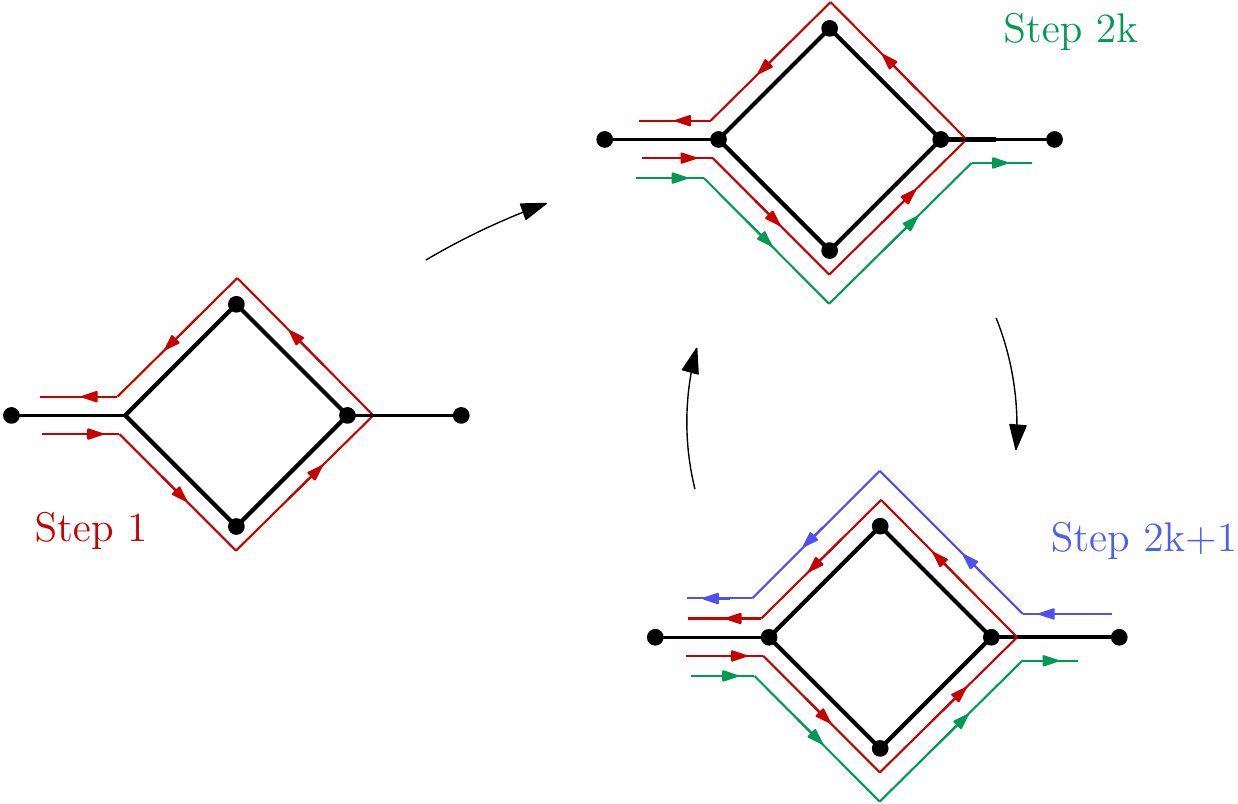}
\caption{LRV-e strategy on each component of the graph $G_D$.} \label{pattern-LRVe}
 \end{figure}
In other words, the first time a component is traversed, the path
changes direction and goes back to the start. The rest of the times
when the component is traversed, the direction does not change.
Thus, in order to traverse the $i$th component in the chain, we need
to traverse the first $i-1$ components in the chain.
\end{proof}

\subsection{Worst-Case behavior of LFV-v and LFV-e}


In the following, we provide evidence that a small polynomial upper bound on the worst-case latency
is unlikely for LFV-v.
We start by showing some interesting properties of graphs explored under LFV-v.
It would seem at first that paths in the graph must be in nonincreasing frequency.
This is so as we always select the neighbor of lowest frequency. However, if
all  node's neighbors have the same or higher frequency, then
the destination node will have strictly larger frequency than the present
neighbor (see Figs. \ref{path} and \ref{staircase}).

\begin{figure}\centering
 \includegraphics[width=0.22\textwidth]{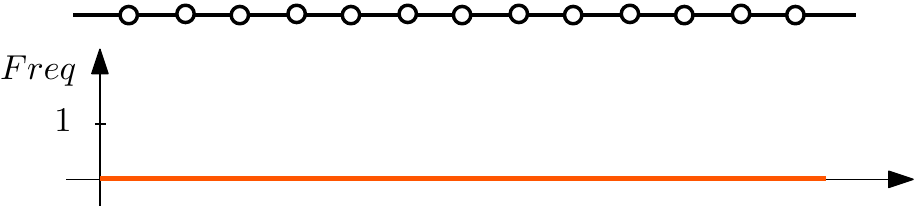}\hspace{0.5cm}
 \includegraphics[width=0.22\textwidth]{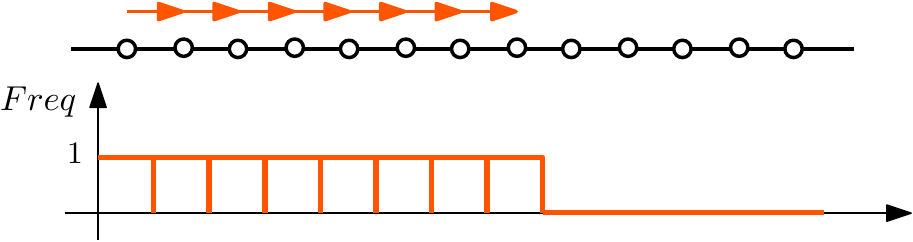}
\caption{A path being traversed from left to right with its frequency histogram below.
Initially all nodes have frequency zero. Then half way through the path traversal
nodes to left have frequency 1 and nodes to the right are still at zero.%
} \label{path}
 \end{figure}

%

\begin{figure}\centering
 \includegraphics[width=0.21\textwidth]{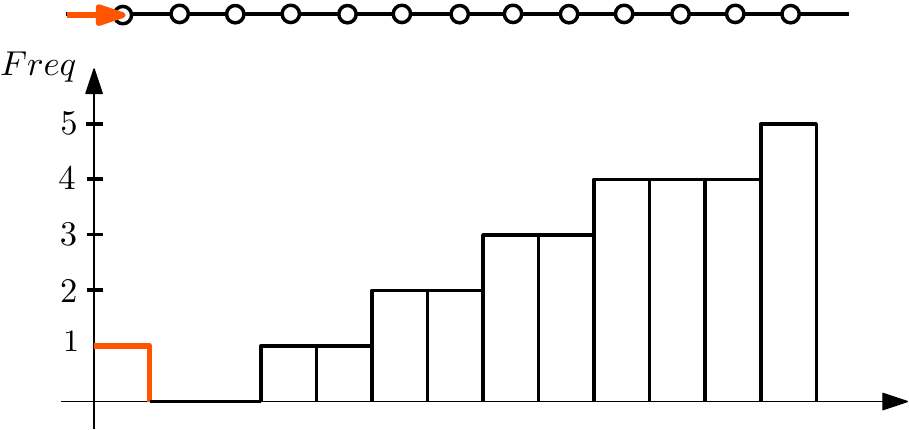} \hspace{0.5cm} \includegraphics[width=0.21\textwidth]{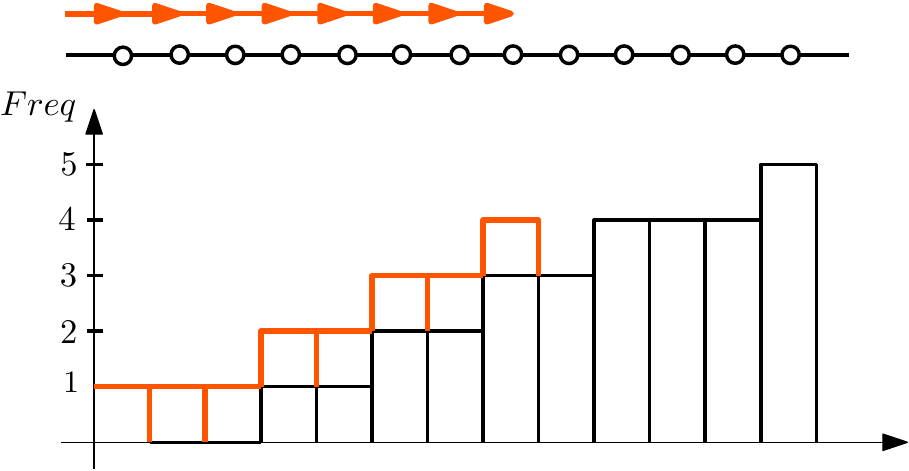}
\caption{A path with a corresponding staircase pattern in the histogram.} \label{staircase}
 \end{figure}

Lastly we observe that it is possible to create dams or barriers by having a
flower configuration  in the path (see Fig. \ref{petals}). We reach
the center of the flower and then take the loops or petals, thus increasing the
count of the center (see histogram on Fig. \ref{petals}).
Then the robot moves past the center node of the flower, which forms
a barrier that impedes the robot from traversing from right to left past the
center of the flower, until the count of the nodes to the right of the path has
risen to match that of the barrier.
\begin{figure}\centering
 \includegraphics[width=0.45\textwidth]{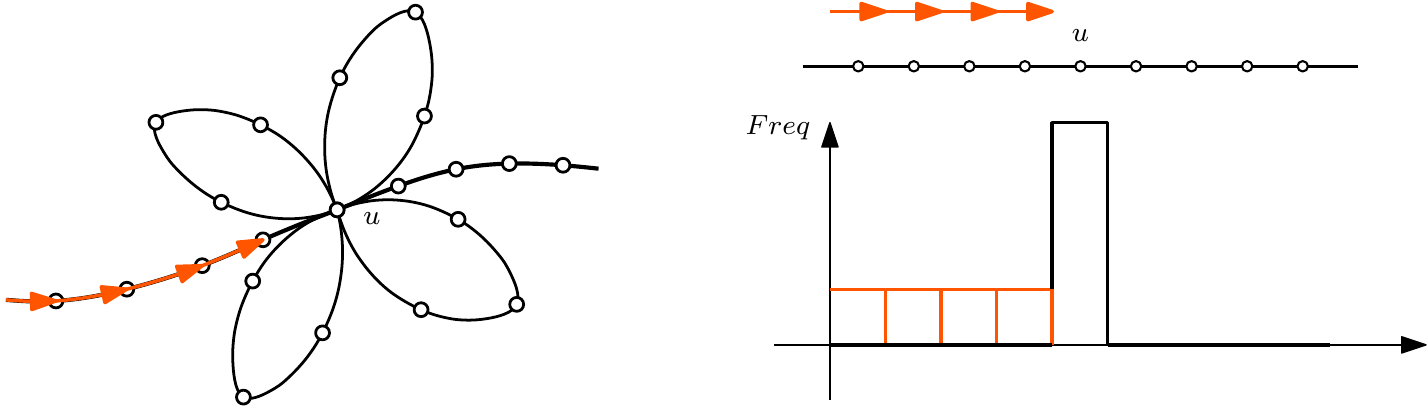}
\caption{A path with a ``flower'' configuration which creates a barrier.%
} \label{petals}
 \end{figure}
With these three basic configurations in hand, we can combine them to create a
graph in which the starting node $s$ has $\delta(s)$ neighbors, $\delta-1$ of which lead via
staircases with barriers to the last neighbor of $s$ which revisits
$s$ (see Fig. \ref{freq}).
That is for each time we go from $s$ to one of the first $\delta-1$ neighbors
we then climb a staircase up to the last common neighbor of $s$. Then from
that neighbor we enter each staircase from the ``high'' side until stopped
by a barrier, which makes us return to the neighbor of $s$, eventually
revisiting $s$ from this last neighbor. This shows the following theorem.

\begin{theorem}
\label{th:ratio}
There exists a configuration for LFV-v in which some neighbors of the starting
vertex have a frequency count of $k$, while the starting point has
a frequency count of $k\,\delta$. Moreover, the value of $k$ can be as high
as $\Theta(n/\delta)$.
\end{theorem}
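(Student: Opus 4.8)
The plan is to turn the informal combination of the three elementary gadgets (the path of Fig.~\ref{path}, the staircase of Fig.~\ref{staircase}, and the flower barrier of Fig.~\ref{petals}) into a single graph, depicted in Fig.~\ref{freq}, and then to track the frequency profile of the \emph{entire} vertex set through the execution of LFV-v. First I would fix the construction precisely: the start vertex $s$ has neighbors $a_1,\dots,a_{\delta-1}$ together with one distinguished neighbor $w$, and each $a_i$ is joined to $w$ by a staircase carrying a flower barrier, oriented so that the ascent $s\to a_i\to\cdots\to w$ is forced while the barrier blocks the descent from $w$ toward $a_i$ until the counts on the far side have risen to match it. The staircase height and the number of petals at each barrier are left as parameters, to be chosen later as explicit functions of the target count $k$.

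The heart of the argument is a loop invariant maintained over full cycles of the trajectory. I would prove by induction on $j$ that after the $j$-th cycle the count at $s$ equals $j\delta$, the count at each neighbor $a_i$ and at $w$ equals $j$, and every internal staircase cell carries exactly the monotone profile that a freshly armed barrier requires. In the inductive step I follow the robot out of $s$: because all $\delta$ neighbors are tied, the LFV-v rule drives it along some $a_i$ and up to $w$; from $w$ it then probes each staircase from the high side, is repelled by each barrier in turn, and is finally returned to $s$ through $w$, exactly as described before the statement. The bookkeeping of one such cycle shows that $s$ is entered $\delta$ times while every neighbor is incremented once, which reinstates the invariant with $j+1$ in place of $j$; evaluating it after $k$ cycles yields the asserted counts $k$ at the neighbors and $k\delta$ at $s$.

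The main obstacle will be showing that the greedy LFV-v choice is genuinely \emph{forced} at each micro-step, i.e.\ that every barrier acts as a one-way valve under the evolving counts. This is precisely where the three gadgets must be glued together so that, throughout a cycle, each barrier's petal count dominates the cells of the staircase just below it until the intended release moment, so the robot neither leaks backward too early nor stalls. I expect this to reduce to tracking the full count vector of one staircase--barrier unit across a cycle and checking that it returns to the required monotone profile; the calculation is routine once the parameters are pinned down, but selecting those parameters so that all $\delta-1$ barriers release in the correct order is the delicate step.

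Finally, to obtain the magnitude bound I would count vertices. A staircase--barrier unit that can sustain a count of $k$ uses $\Theta(k)$ vertices, and there are $\delta-1=\Theta(\delta)$ such units attached to $s$, so the whole graph has $n=\Theta(k\delta)$ vertices. Solving for the largest admissible $k$ gives $k=\Theta(n/\delta)$, completing the proof.
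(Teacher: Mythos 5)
Your proposal follows essentially the same route as the paper: the paper likewise builds the graph of Fig.~\ref{freq} by gluing the path, staircase, and flower-barrier gadgets onto a hub $s$ with $\delta-1$ staircase neighbors meeting at a last common neighbor, argues informally that each cycle increments $s$ by $\Theta(\delta)$ while incrementing each staircase neighbor once, and reads off $k=\Theta(n/\delta)$ from the gadget sizes. Your added loop invariant and explicit vertex count are a reasonable way to make the paper's (quite informal) argument rigorous; just note that the common neighbor $w$ cannot stay tied at count $j$ with the $a_i$ — it is visited far more often — which is consistent with the theorem's claim about only \emph{some} neighbors of $s$.
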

\begin{figure}\centering
 \includegraphics[width=0.26\textwidth]{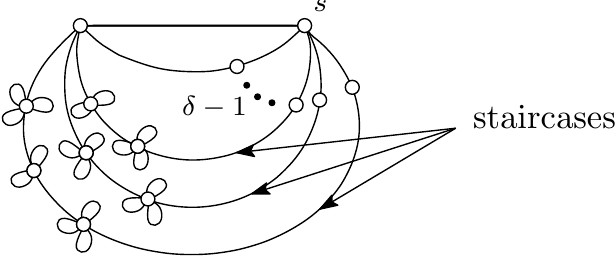}
\caption{A configuration in which the frequency of the starting point $s$ is
much larger than the majority of its neighbors.  } \label{freq}
 \end{figure}
This result provides some indication that the worst-case ratio between
smallest and largest frequencey labels of vertices may be exponential,
which would arise if we could construct an example in which $\Omega(\delta)$ linear
ratios as in Theorem~\ref{th:ratio} between neighbors occur.
From this it can be shown that at most $\delta^d$ steps are required to explore the graph,
where $d$ and $\delta$ are the diameter and the maximum degree of the graph.
\REMOVED{
\begin{lemma}
Consider a graph $G_D$ patrolled using the
strategy LFV-v. Let $g$ denote the frequency of the starting node $s$ at time $t$
and let $\delta(s)$ be the number of neighbors of $s$. Then there are at least $(g \mod \delta)$
neighbors with frequency at least $\lfloor g/\delta\rfloor +1$ and the remaining
neighboring nodes have frequency at least $\lfloor g/\delta\rfloor$.
\end{lemma}
\begin{proof}
By induction on $g$. Denote as $g'=g-1$, $f=\lfloor g/\delta\rfloor$,
$f'=\lfloor g'/\delta\rfloor$.

\noindent \emph{Basis of induction.} $g=0.$ In this case $f=\lfloor 0/\delta\rfloor=0$, so we have
trivially at least $(0 \mod \delta) =0$ nodes with frequency at least 1;
also trivially the rest of the nodes have frequency at least 0.

\noindent For good measure we prove the case $g=1$, so now we visit a
node that had frequency 0 and now has frequency 1, thus the number
of neighbors with frequency $f+1$ increased from $(g' \mod \delta)$ to
$(g'+1 \mod \delta) = (g \mod \delta)$, while the rest of the nodes have
frequency at least $f'=f=\lfloor g/\delta\rfloor=0$.

\noindent \emph{Induction step.} When $g$ increases by one, we have either (1) $f=f'$
or (2) $f=f'+1.$\\

\noindent In case (1) the robot explores a neighbor with min frequency at
least $f'=f$ whose frequency increases to $f+1$, thus increasing the
number of neighbors with that frequency by 1 (if no such neighbor
exists this means all neighbors already have frequency at least $f+1$
and hence it trivially holds that at least $(g \mod \delta)$ neighbors have
frequency at least $f+1$).\\

\noindent In case (2) when $f=f'+1$ we have $\lfloor(S-1)/\delta\rfloor+1=\lfloor S/\delta\rfloor$ which
implies $(S \mod \delta) = 0$ and $(S-1 \mod \delta) = \delta-1$. Hence all but one
of the neighbors are guaranteed to have frequency at least $f'+1$
(which is equal to $f$) and there is at most one neighbor with frequency
$f'$ which is the min and gets visited thus increasing its frequency
to $f'+1=f$. This means that now all neighbors have frequency at least $f$
and trivially at least $(\delta(s) \mod \delta)=0$ neighbors have frequency at
least $f+1$, as claimed.
\end{proof}
}
\begin{theorem}
The highest frequency node in a graph with unvisited nodes has frequency
bounded by $\delta^d$.
\end{theorem}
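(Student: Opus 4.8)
The plan is to show that under LFV-v the visit count of a node can exceed that of any neighbor by at most a factor of $\delta$, and then to propagate this local contraction along a shortest path from the busiest node to any unvisited node. Throughout, let $f(\cdot)$ denote the current visit count (frequency), let $\delta$ be the maximum degree, and let $d$ be the diameter.

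First I would establish the key local invariant: at every time step and for every vertex $v$, each neighbor $u$ of $v$ satisfies $f(u)\ge\lfloor f(v)/\delta\rfloor$. This is exactly the balancing effect of LFV-v. Every time the patrolling robot leaves $v$ it increments the currently least-visited neighbor of $v$, so the increments contributed by departures from $v$ are spread as evenly as possible over its at most $\delta$ neighbors. Since frequencies only increase, the minimum neighbor frequency is non-decreasing, and a water-filling argument shows that raising it from level $j$ to $j+1$ costs at most $\delta$ departures from $v$ (at most $\delta$ neighbors sit at level $j$, and each departure lifts one of them). Hence after $m$ departures every neighbor has frequency at least $\lfloor m/\delta\rfloor$, and because $m$ equals $f(v)$ up to an additive constant (accounting for the at most one vertex the robot currently occupies, and for the start vertex, which is entered without an incoming move), the invariant follows. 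This is the same balancing property recorded in the preceding lemma.

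Next I would fix the vertex $v_0$ of maximum frequency $F=f(v_0)$ and an arbitrary unvisited vertex $w$, so $f(w)=0$. Because the graph has diameter $d$, there is a path $v_0,v_1,\dots,v_k=w$ with $k\le d$. Applying the invariant along this path and using the integer identity $\lfloor\lfloor a/b\rfloor/c\rfloor=\lfloor a/(bc)\rfloor$ to collapse the nested floors, I obtain $f(v_i)\ge\lfloor F/\delta^{\,i}\rfloor$ for every $i$, and in particular $0=f(w)\ge\lfloor F/\delta^{\,k}\rfloor\ge\lfloor F/\delta^{\,d}\rfloor$. Therefore $\lfloor F/\delta^{\,d}\rfloor=0$, i.e. $F<\delta^{\,d}$, which is the claimed bound.

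The main obstacle is the local invariant, not the propagation; once the invariant is in hand, the propagation is a routine telescoping of floors. The delicate point in the invariant is that a neighbor $u$ of $v$ may also be incremented by visits arriving from vertices other than $v$; I would argue that such external increments only raise frequencies and can therefore only help the lower bound, so the worst case for the minimum neighbor frequency is when departures from $v$ must do all the balancing alone. A secondary nuisance is the off-by-one between the number of departures from $v$ and $f(v)$; it contributes only an additive constant and does not affect the $\delta^{\,d}$ order, though a careful accounting is what yields the clean inequality $f(u)\ge\lfloor f(v)/\delta\rfloor$ and hence the exact bound.
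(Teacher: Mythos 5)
Your proposal is correct and follows essentially the same route as the paper: the paper's (very terse) proof likewise takes a shortest path of length at most $d$ from an unvisited node to the busiest node and asserts that the frequency can grow by at most a factor of $\delta$ per step, which is exactly the local balancing invariant $f(u)\ge\lfloor f(v)/\delta\rfloor$ you establish. Your write-up simply supplies the water-filling justification and the careful telescoping of floors that the paper leaves implicit.
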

\begin{proof}
Consider a path from an unvisited node to the node with highest frequency.
The path is of length at most the diameter $d$ of the graph. In each step
the increase in frequency is at most a factor $\delta$ over the unvisited node
hence the frequency of the most visited node is bounded by $\delta^d$.
\end{proof}
However, there is no known example of a dual of a triangulation graph
displaying this worst-case behavior.
\begin{theorem}\label{LFVe-lb}
There exist graphs with $n$ vertices of maximum degree 3, in which the largest
refresh frequency for a node is $\Theta(n^2).$
\end{theorem}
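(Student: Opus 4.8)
The plan is to exhibit a single fixed graph $G_D$ that is simultaneously a valid dual of a triangulation (planar, maximum degree $3$, as guaranteed by the construction underlying Theorems~\ref{ConnectedOwners} and~\ref{ConnectedEdges}) and that forces LFV-e into a quadratic number of edge traversals before a designated node is revisited. Following the chain construction suggested by Fig.~\ref{triang-LRV}, I would build $G_D$ as a sequence of $\Theta(n)$ identical components connected in series, each component being a small gadget whose internal edges, under the least-frequently-visited-edge rule, behave as a one-way ``valve'': the very first time the robot enters a fresh component it is forced to turn around and return to that component's entrance, whereas on every later visit the balanced edge counts let it pass straight through. The designated node whose refresh time I will bound is the first vertex $u$ of the first component.

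The core of the argument is a local analysis of one component under LFV-e. First I would label the edges of a single gadget and track the vector of their frequency counts as the robot moves, under the tie-breaking convention fixed by the policy. The goal is to prove two invariants: first, that when a component is entered for the first time (all its internal edges at count $0$) the min-frequency rule routes the robot along an oriented path to the far side and then immediately back along the complementary edges, so the robot exits on the side it entered; and second, that once a component's internal counts have been equalized by a bounded number of back-and-forth passes, a subsequent entry lets the robot cross in a single direction and be deposited into the next component. This is exactly the ``alternating path / barrier'' behavior sketched for LRV-e in Fig.~\ref{pattern-LRVe}, and establishing it rigorously for the edge-counting variant is the main obstacle: I must show that the counts of the boundary edges joining consecutive components dominate the routing decision, so that the robot cannot advance into component $i$ until each of components $1,\dots,i-1$ has been ``refilled.''

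With the valve property in hand, the recursion is routine. By the barrier invariant, each time the frontier advances from component $i-1$ into component $i$ the robot must first retraverse all of components $1,\dots,i-1$, so advancing the frontier by one step costs $\Theta(i)$ edge traversals. Summing over the chain gives a total of $\sum_{i=1}^{\Theta(n)} \Theta(i) = \Theta(n^2)$ edge traversals before the robot returns to $u$. Since each edge traversal takes at least one unit of time and $G_D$ has $\Theta(n)$ vertices, the refresh time of $u$ is $\Omega(n^2)$; combined with the general $O(n\cdot d)$ upper bound for LFV-e, which for this chain has $d=\Theta(n)$ and hence gives $O(n\cdot d)=O(n^2)$, the largest refresh time on $G_D$ is exactly $\Theta(n^2)$, as claimed.

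Finally I would verify the reduction details: that the gadget is realizable as a triangulation dual (every gadget vertex has degree at most $3$ and the concatenated chain remains planar), that the tie-breaking rule assumed in the local analysis coincides with the one fixed for the policy, and that gluing $\Theta(n)$ gadgets keeps the total vertex count at $\Theta(n)$. I expect the geometric realizability of the gadget and the per-component bookkeeping of edge counts to be the only delicate points; once the valve behavior of a single component is pinned down, the summation and the matching upper bound are immediate.
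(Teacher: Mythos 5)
Your overall architecture --- a chain of $\Theta(n)$ identical four-cycle gadgets in which the first traversal of a fresh component bounces the robot back to its entrance while every later traversal lets it pass straight through, so that advancing the frontier to component $i$ costs $\Theta(i)$ and the total is $\sum_{i}\Theta(i)=\Theta(n^2)$ --- is exactly the paper's (the paper uses the chain of $4$-cycles of Fig.~\ref{triang} and the alternating traversal pattern of Fig.~\ref{LFV}). However, you are analyzing the wrong policy. Despite the theorem's label, the paper's proof and the surrounding text (the caption ``LFV-v strategy on each component'' of Fig.~\ref{LFV}, and the sentence ``Thus, LFV-v may also display bad worst-case behavior'' immediately after the proof) make clear that this is the paper's quadratic lower-bound example for LFV-\emph{v}, i.e.\ for \emph{vertex} frequency counts. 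The distinction is not cosmetic: for LFV-e a quadratic example on a diameter-$\Theta(n)$ chain would merely show that the $O(m\cdot d)$ upper bound of Theorem~\ref{th:ub.LFV-e} is tight, whereas the paper needs this construction to show that LFV-v --- for which no comparable polynomial upper bound is available, only the $\delta^d$ bound --- can already be forced into superlinear behavior on a triangulation dual. Your ``valve'' invariant therefore has to be argued with vertex counts rather than edge counts; the barrier mechanism (a high-count vertex blocking passage until its neighbors catch up, as in the staircase/flower discussion preceding the theorem) is specific to the vertex-counting rule and does not transfer verbatim to edge counts at the junctions between components, where the decisive comparisons involve different edges than the ones your analysis tracks.

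Independently of the policy mix-up, the only nontrivial content of the proof --- the per-gadget verification that the first visit reverses direction and every subsequent visit passes through, under a fixed tie-breaking rule --- is precisely the step you defer (``the main obstacle,'' ``I must show that\ldots''). The recursion and the final summation are indeed routine once that invariant is in hand, and in fairness the paper itself only gestures at Fig.~\ref{LFV} rather than verifying the invariant count by count; but as written your proposal is a plan for a proof rather than a proof, and the plan is aimed at LFV-e rather than LFV-v.
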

\begin{proof}
Consider the graph $G_D$ of Fig. \ref{triang} as described in its caption
which consists of a chain
of $\Theta(n)$ cycles of length 4 connected in series. As illustrated in Fig. \ref{LFV}, each component is traversed
initially following the colored oriented paths from step 1 and further
alternating the paths from step $2k$ and $2k+1$.
\begin{figure}\centering
 \includegraphics[width=0.39\textwidth]{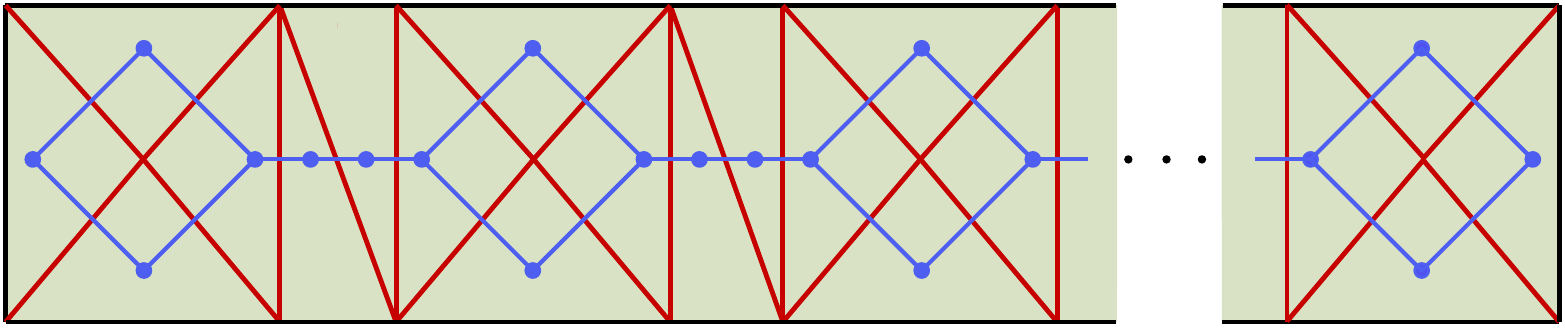}
\caption{This figure depicts (1) a rectangular polygonal region in black lines (2) its triangulation $G_P$ in red lines
and (3) the dual graph of the triangulation $G_D$ shown in blue lines.} \label{triang}
 \end{figure}
%
 \begin{figure}[h]\centering
 \includegraphics[width=0.33\textwidth]{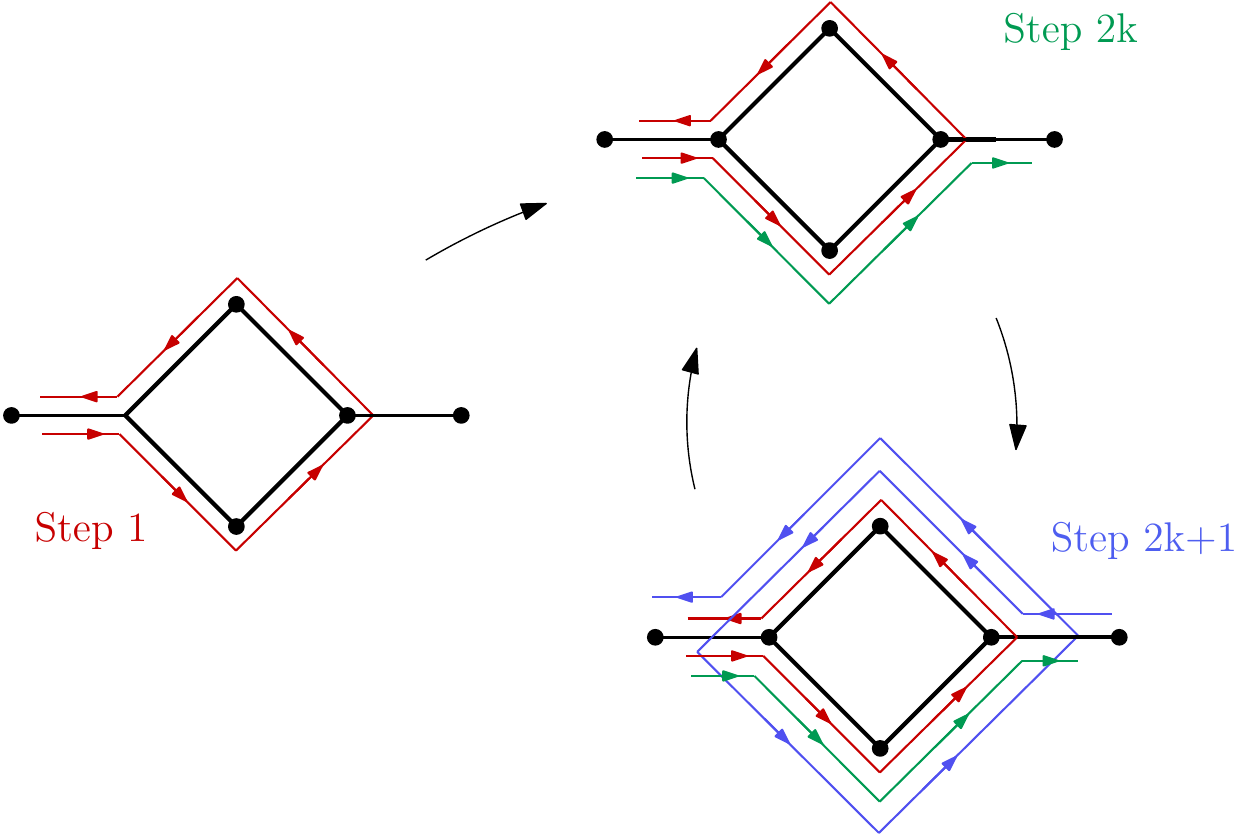}
\caption{LFV-v strategy on each component of the graph $G_D$.} \label{LFV}
 \end{figure}
In other words, the first time a component is traversed, the path
changes direction and goes back to the start. The rest of the times
when the component is traversed, the direction does not change. Thus, in order to traverse the $i$th component in the chain, we need
to revisit the first $i-1$ components in the chain.
\end{proof}

Thus, LFV-v may also display bad worst-case behavior.
Fortunately, the following was proven by Cooper et al.~\cite{cik+-drwug-11}
for the worst-case behavior of LFV-e.

\begin{theorem}
\label{th:ub.LFV-e}
In a graph $G$ with at most $m$ edges and diameter $d$, the latency of each edge when carrying out
LFV-e is at most $O(m\cdot d)$.
\end{theorem}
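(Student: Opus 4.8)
The plan is to reduce the latency bound to a bound on how \emph{unbalanced} the edge--traversal counts can become, and then to control that imbalance using the diameter. Write $c_t(e)$ for the number of times edge $e$ has been traversed up to round $t$, so that LFV-e always increments, among the edges incident to the robot's current vertex, one of \emph{smallest} count. Since exactly one edge is traversed per round, the total $\Phi_t=\sum_e c_t(e)$ grows by exactly $1$ each round. The whole argument rests on a single quantity, the \emph{spread} $\mathrm{sp}_t=\max_e c_t(e)-\min_e c_t(e)$, which I claim stays $O(d)$.

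First I would show that a spread bound already yields the theorem, by a conservation argument. Suppose some edge $e$ is \emph{not} traversed during an interval $[t_1,t_2]$ of length $T$; then $c_t(e)$ equals a constant $\gamma$ throughout. If $\mathrm{sp}_t=O(d)$ for all $t$, then at round $t_2$ we have $\min_e c_{t_2}(e)\le\gamma$, so every edge has count at most $\gamma+O(d)$, while at round $t_1$ we have $\max_e c_{t_1}(e)\ge\gamma$, so every edge has count at least $\gamma-O(d)$. Hence $\Phi_{t_2}\le m(\gamma+O(d))$ and $\Phi_{t_1}\ge m(\gamma-O(d))$, giving
\[
T=\Phi_{t_2}-\Phi_{t_1}\le m\cdot O(d)=O(m\,d).
\]
Thus every edge is revisited within $O(m\,d)$ rounds, which is exactly the claimed latency.

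It remains to prove the spread invariant, and this is the heart of the matter. I would prove the stronger, distance--sensitive statement that for any two edges $e,e'$ their counts differ by $O(\mathrm{dist}(e,e'))$, where $\mathrm{dist}$ is the edge--adjacency distance (consecutive edges sharing a vertex); chaining along a shortest path of length at most $d+2$ then yields $\mathrm{sp}_t=O(d)$. The invariant is established by induction on rounds: only the single traversed edge changes, so I only need to recheck inequalities involving that edge. The clean case is the \emph{departure} endpoint $u$, where the traversed edge was a minimum among the edges at $u$; incrementing it keeps it balanced against its $u$--neighbors, and the distance slack absorbs the $+1$.

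The main obstacle is the \emph{arrival} endpoint. When the robot crosses $e$ from $u$ to $v$, the edge $e$ is the locally cheapest choice only at $u$, not at $v$, so from $v$'s side the increment is forced and may temporarily raise $e$ above its neighbors at $v$. The difficulty is to argue that such forced increments never let a ``high'' edge outrun a distant ``low'' edge by more than their distance: intuitively, an edge can remain expensive only while the entire region that feeds the robot into it is also expensive, so any excess is charged against distance toward the least--visited part of the graph. Making this precise --- showing the distance--sensitive inequality is restored after an arrival increment, using that a shortest edge--path to the offending low edge must eventually pass through an edge whose count the min--rule already controls --- is the step I expect to require the most care, and is where bounded maximum degree (few incident edges to rebalance at each vertex) is convenient. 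Once the invariant is in place, the conservation argument above closes the proof.
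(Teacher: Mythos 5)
The paper does not actually prove this statement; it imports it verbatim from Cooper et al.~\cite{cik+-drwug-11}, so the only benchmark is that cited proof, whose overall architecture (a local balance property on edge counters, chained over the diameter, followed by a counting argument) your proposal correctly reconstructs. Your first step is sound: granting the spread bound $\mathrm{sp}_t = O(d)$, the conservation argument does yield latency $O(m\,d)$, since $\Phi_t$ grows by exactly one per round and both endpoints of the idle interval pin all $m$ counters to within $O(d)$ of the frozen value $\gamma$.

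The gap is that the spread invariant is essentially the entire content of the theorem, and you do not prove it. Your induction closes only in the departure case, where the min-rule at $u$ directly controls which edge is incremented. In the arrival case you concede that the increment of $e$ at $v$ is forced and offer only the intuition that an edge can stay expensive only while the region feeding the robot into it is expensive; that is a restatement of what must be shown, not an argument. In particular, a per-round induction on the hypothesis that counts of edges at edge-distance $k$ differ by $O(k)$ does not obviously survive an arrival increment: the incremented edge may already be a strict local maximum at $v$, and nothing in the induction hypothesis about the configuration at time $t$ prevents it from being pushed further above a distant low edge. Closing this requires a genuinely additional idea --- for instance, tracking what must have happened at the departure endpoints along the walk since the low edge was last a local minimum, or a potential or token-counting argument of the kind used in the cited work --- and bounded maximum degree does not rescue the induction by itself. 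As written, the proposal is a correct reduction plus an unproven key lemma, so it does not yet constitute a proof.
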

This allows us to establish a good upper bound on LFV-e in our setting.
\begin{corollary}
\label{co:ub.LFV-e}
Let $G_D=(V_D,E_D)$ be the dual graph of a triangulation, with $|V_D|=n$ vertices and diameter $d$. Then
the latency of each vertex when carrying out LFV-e is at most $O(n\cdot d)$.
\end{corollary}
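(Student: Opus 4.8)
The plan is to obtain the bound by specializing Theorem~\ref{th:ub.LFV-e} to the structural parameters of a triangulation dual, and then converting the edge-latency guarantee into a vertex-latency guarantee.

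First I would bound the number of edges $m$ of $G_D$ in terms of $n$. Since $G_D$ is the dual graph of a triangulation, each vertex $\Delta \in V_D$ corresponds to a triangle, which shares a side with at most three other triangles; hence $G_D$ has maximum degree at most $3$. The handshaking lemma then gives $2m = \sum_{\Delta \in V_D}\deg(\Delta) \le 3n$, so $m \le \tfrac{3}{2}n = O(n)$. Substituting $m = O(n)$ into Theorem~\ref{th:ub.LFV-e} immediately yields that, under LFV-e, the latency of every dual edge is at most $O(m\cdot d) = O(n\cdot d)$.

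The remaining step is to pass from edge latency to vertex latency. Here I would observe that every traversal of a dual edge constitutes a visit to both of its endpoints, so the gap between consecutive visits of a vertex is at most the gap between consecutive traversals of any single edge incident to it. Because the dual of a connected triangulation is connected, every vertex has at least one incident edge, and the edge-latency bound $O(n\cdot d)$ therefore transfers directly to each vertex, which is exactly the claim.

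The step requiring the most care is the degree argument: one must confirm that ``triangulation'' here really does force maximum degree three (a triangle has exactly three sides, each shared with at most one neighbor), since it is precisely this planar-triangulation structure that collapses the generic $O(m\cdot d)$ bound of Theorem~\ref{th:ub.LFV-e} to $O(n\cdot d)$. Everything else is a direct substitution, so I do not expect any genuine obstacle beyond making this structural observation explicit.
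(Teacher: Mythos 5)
Your proposal is correct and follows essentially the same route as the paper: substitute $m=O(n)$ into Theorem~\ref{th:ub.LFV-e} and pass from edge latency to vertex latency via the observation that traversing a dual edge visits both endpoints. The only cosmetic difference is that you derive $m=O(n)$ from the maximum-degree-3 property of triangulation duals (via handshaking), whereas the paper invokes planarity; both give the same conclusion.
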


\begin{proof}
Since $G_D$ is planar, it follows that $n\in\Theta(m)$, where $m=|E_D|$ is the number of edges
of $G_D$. Because patrolling an edge requires visiting both of its vertices, the claim follows from the
upper bound of Theorem~\ref{th:ub.LFV-e}.
\end{proof}

We note that this bound can be tightened for regions with small aspect ratio, for which
the diameter is bounded by the square root of the area.

\begin{corollary}
\label{co:aspect}
For regions with diameter $d\in O(\sqrt{n})$, the
latency of each dual vertex when carrying out LFV-e is at most $O(n^{1.5})$.
\end{corollary}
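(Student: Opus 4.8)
The plan is to obtain this bound as an immediate specialization of Corollary~\ref{co:ub.LFV-e}, so almost no new work is required. That corollary already establishes that, for the dual graph $G_D$ of any triangulation on $n$ vertices with diameter $d$, the latency of every dual vertex under LFV-e is $O(n\cdot d)$. Hence the entire content of the present statement is to feed the additional geometric hypothesis $d\in O(\sqrt{n})$ into that product.

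First I would recall the chain behind Corollary~\ref{co:ub.LFV-e} without re-deriving it: Theorem~\ref{th:ub.LFV-e} bounds the edge latency by $O(m\cdot d)$ in any graph with $m$ edges, and planarity of $G_D$ (via Euler's formula, which forces $m\in\Theta(n)$) converts this into a vertex latency of $O(n\cdot d)$. Second, I would invoke the standing hypothesis of the corollary, namely $d\in O(\sqrt{n})$, and substitute it directly into $O(n\cdot d)$, giving $O(n\cdot\sqrt{n})=O(n^{1.5})$, which is exactly the claim. Everything reduces to this one substitution into an already-proved bound.

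The only point worth a sentence of justification is \emph{why} the regime $d\in O(\sqrt{n})$ is the natural one rather than an artificial assumption: for a region of bounded aspect ratio tiled by $n$ well-shaped triangles of comparable area, the total area is $\Theta(n)$, and the longest triangle-to-triangle shortest path scales like the linear extent of the region, i.e.\ like $\sqrt{\text{area}}=\Theta(\sqrt{n})$. Since this scaling is what the hypothesis encodes, there is essentially no obstacle to overcome; the real work has already been discharged in Theorem~\ref{th:ub.LFV-e} and Corollary~\ref{co:ub.LFV-e}, and the present statement is a pure corollary-of-a-corollary. If anything, the one place a careful reader might want more detail is the geometric claim that bounded aspect ratio yields $d\in O(\sqrt{n})$, but as that is folded into the hypothesis, the proof itself is just the arithmetic $n\cdot\sqrt{n}=n^{1.5}$.
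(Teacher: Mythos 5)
Your proposal is correct and matches the paper's (implicit) argument exactly: the paper treats this as an immediate consequence of Corollary~\ref{co:ub.LFV-e}, obtained by substituting $d\in O(\sqrt{n})$ into the $O(n\cdot d)$ bound. Your additional remark on why bounded aspect ratio yields $d\in O(\sqrt{n})$ mirrors the paper's own prefatory sentence and is a reasonable, if optional, elaboration.
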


\begin{figure*}[t!]
\renewcommand{\figDim}{1.2in}
\renewcommand{\figDimT}{1.2in}
\centering
\subfloat[][Simulation environment]{
\label{fig:MaxRefreshSimulationSetup}
\includegraphics[height=\figDim]{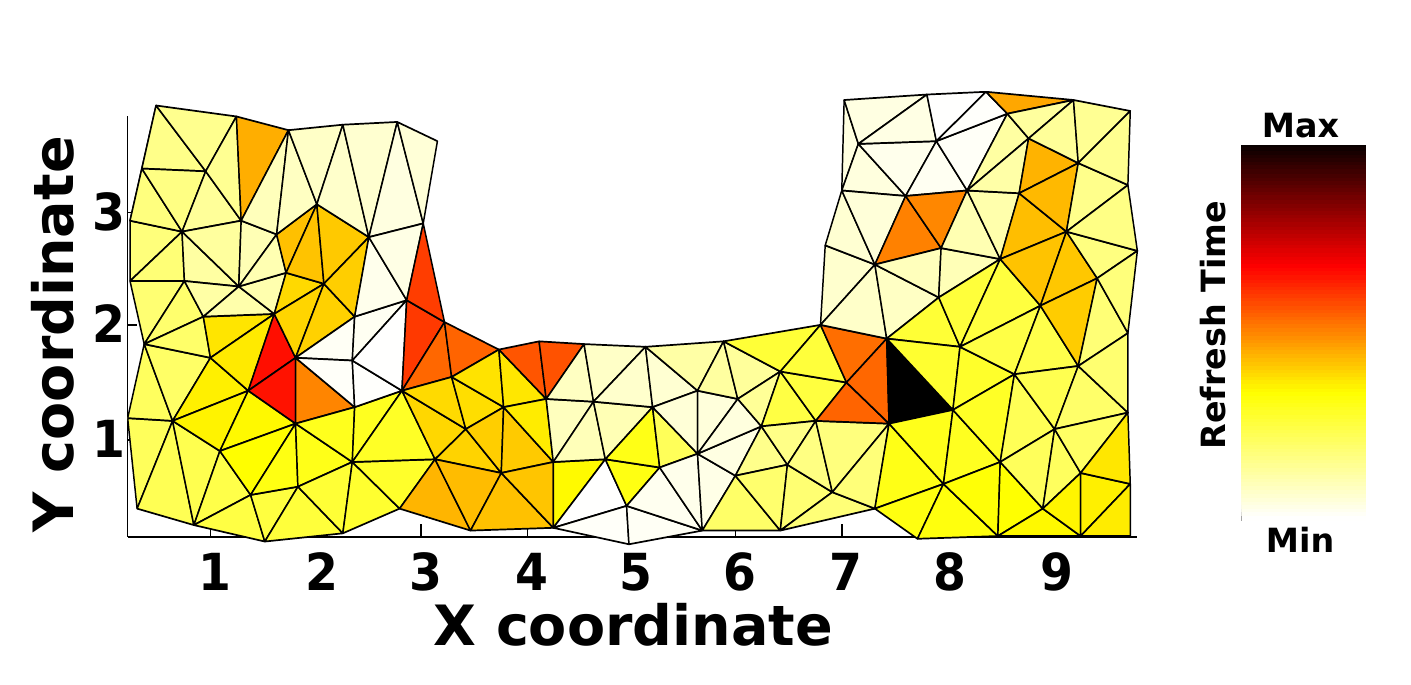}}
\subfloat[][Patrol trace for one robot]{
\label{fig:TracePatrol}
\includegraphics[height=\figDimT]{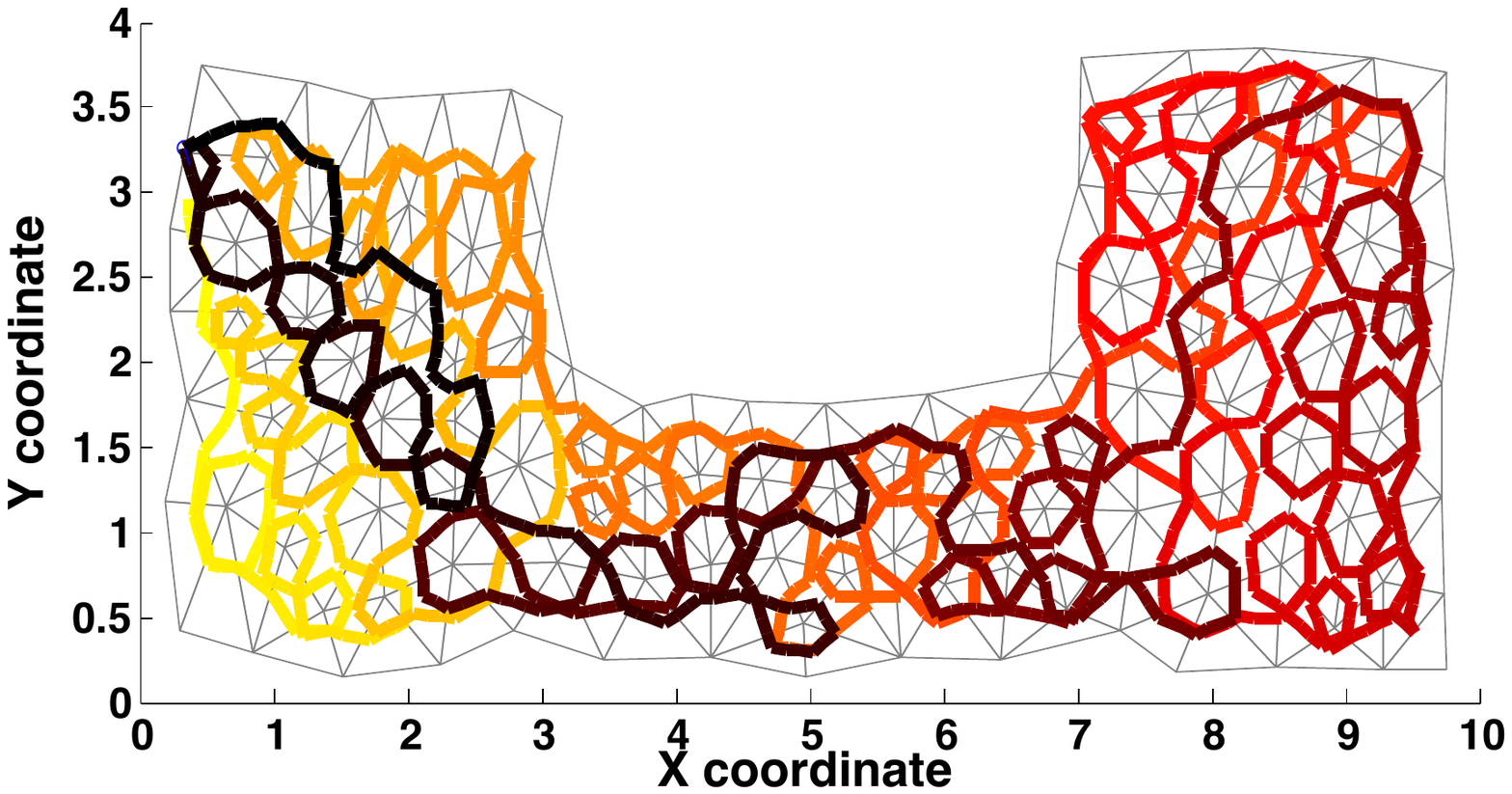}}\\
\subfloat[][Sample experimental trial]{
\label{fig:MaxRefreshSimulationDataExample}
\includegraphics[height=\figDim]{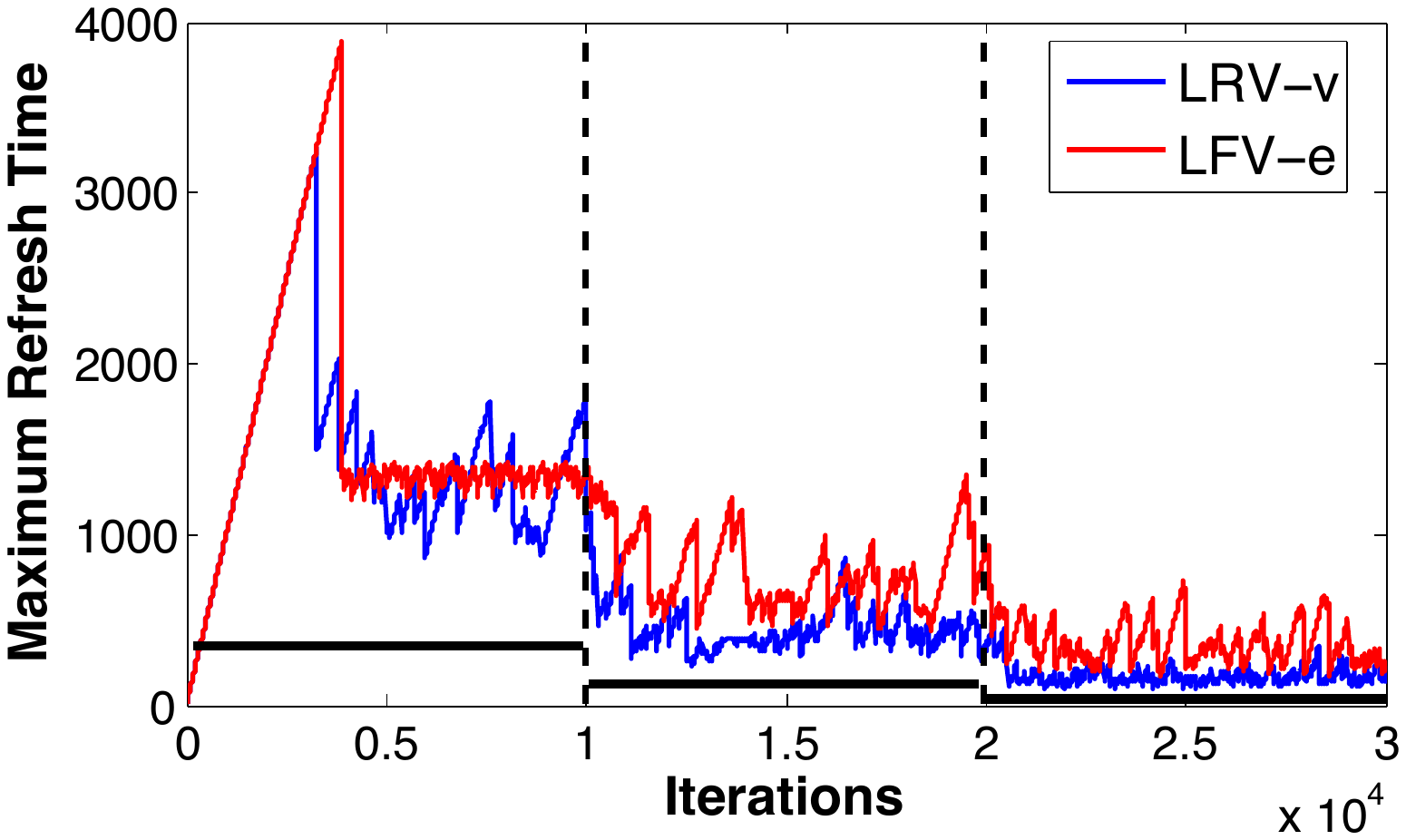}}
\subfloat[][Average max refresh time data]{
\label{fig:MaxRefreshSimulationData}
\includegraphics[height=\figDimT]{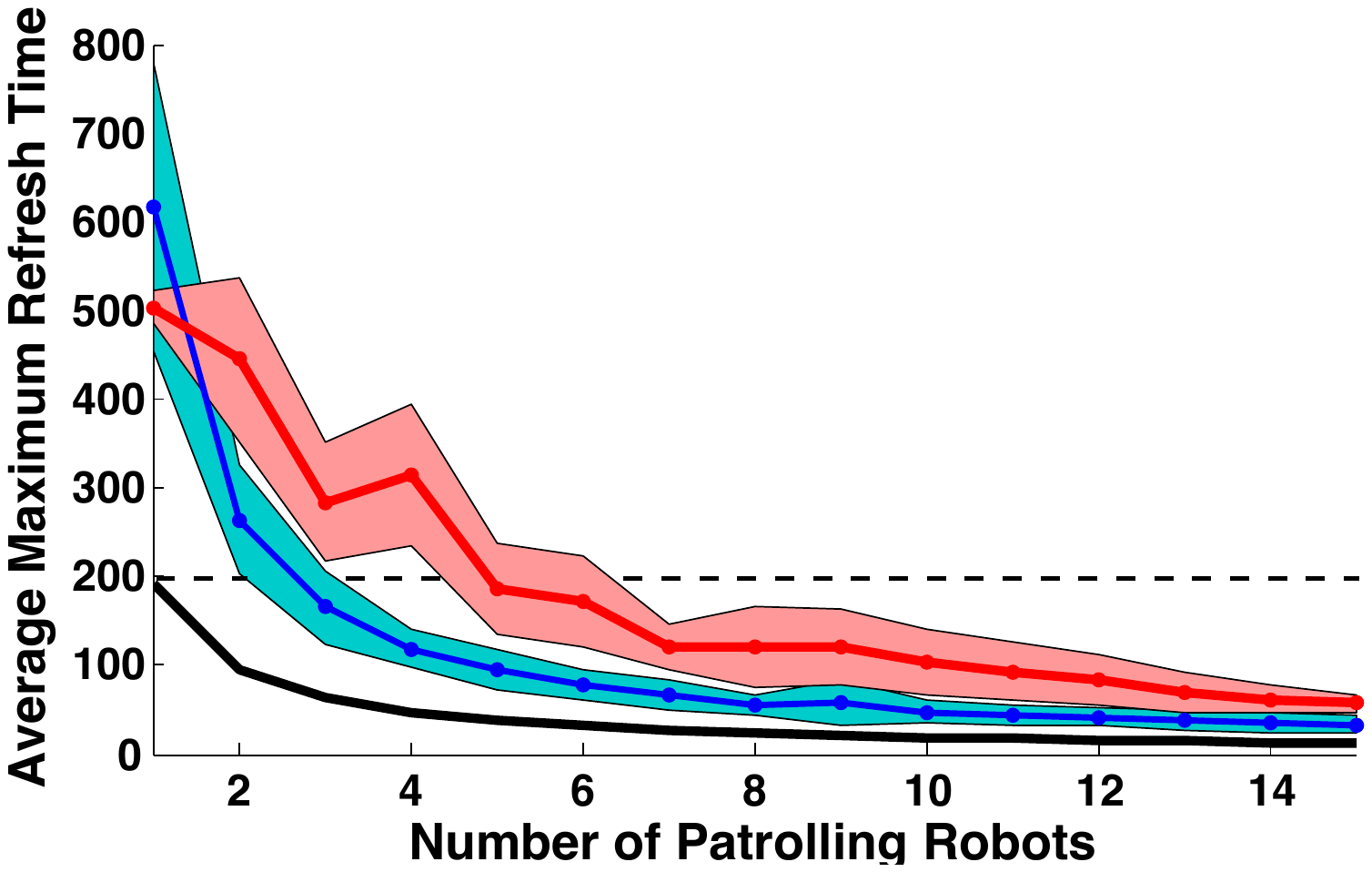}}
\subfloat[][Robot experiment]{
\label{fig:MaxRefreshRobotData}
\includegraphics[height=\figDimT]{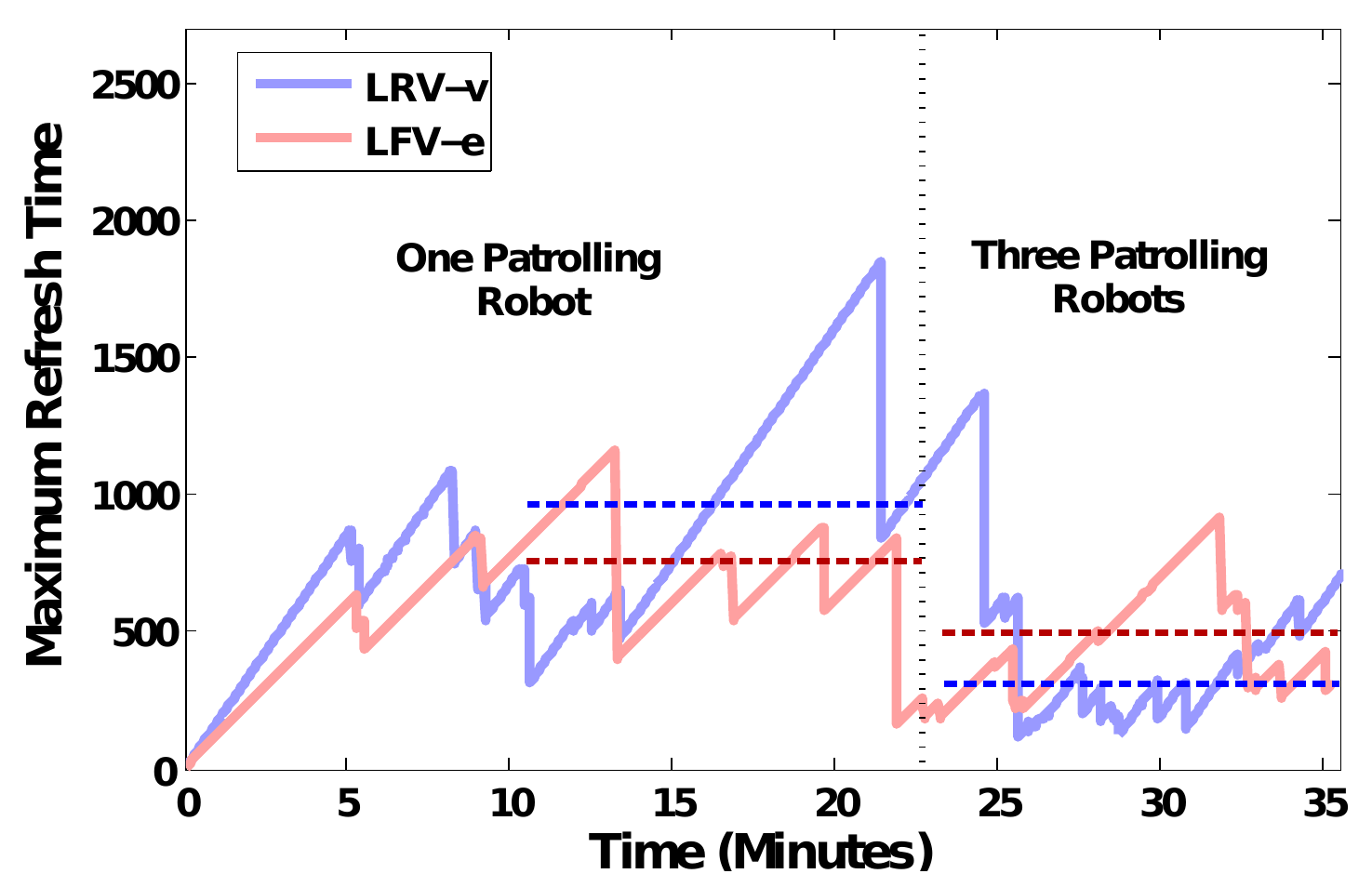}}
\caption{
\textbf{\protect\subref{fig:MaxRefreshSimulationSetup}} Our simulated environment contains 193 triangles. The color of each triangle indicates the refresh time of the triangle. The black triangle in the middle of the triangulation has the maximum refresh time.
\textbf{\protect\subref{fig:TracePatrol}} An example cyclic path from a single patrolling robot. The color of each path segment indicates the age, darkest color is the most recent path segment.  This path produces a worst-case coverage with the tight bound on variance, as can be seen in the red trace of Fig.~\protect\subref{fig:MaxRefreshSimulationData}.
\textbf{\protect\subref{fig:MaxRefreshSimulationDataExample}}  Maximum Refresh Time using LRV-v and LFV-e. Simulations start with one robot, and we put two and six additional robots at 10000 and 20000 iterations. Solid black lines indicate the lower bound of maximum refresh time of each patrolling robot, called baseline.
\textbf{\protect\subref{fig:MaxRefreshSimulationData}} The trend of path length of each patrolling robot according to the number of patrolling robots. The black line shows the lower bound for a perfect set of disjoint patrolling cycles.  The dotted line shows the best performance of a single robot.  We ran the simulations using one to fifteen patrolling robots, $8\%$ of the number of triangles.
\textbf{\protect\subref{fig:MaxRefreshRobotData}} Data from our patrolling experiment.  Our experimental setup is shown in Fig~\ref{fig:NavPatrol}.  It consisted of 16 triangulation robots, and one or three patrolling robots.  The experimental data is very similar to our simulation results.
\label{fig:expData}
}
\end{figure*}

\section{Experimental Results}

In order to validate and test our results, we performed a number of simulations, as
well as experiments with a real-world platform. Fig.~\ref{fig:rone} shows the r-one platform,
developed at Rice University.

\begin{figure}[h]
\begin{center}
\includegraphics[width=.45\linewidth]{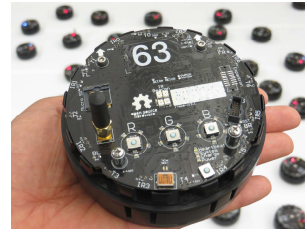}
\caption{Snapshot of the r-one robot used in the experiments.
}
\label{fig:rone}
\end{center}
\end{figure}

An example simulation tun is shown in Fig~\ref{fig:MaxRefreshSimulationSetup}.
Fig.~\ref{fig:MaxRefreshSimulationDataExample} gives results from a real-world patrolling experiment
with $1 - 3$ robots.

With one robot in the simulation, the LFV-e policy produces a closed path that resembles a Hamiltonian path,
shown in Fig.~\ref{fig:TracePatrol}; as such a path is a theoretical
lower bound, which is achievable only in exceptional cases, it is intuitively
clear that we are getting excellent results.
As we add more patrolling robots, the consolidated
Hamiltonian-like path is perturbed and the variance of the maximum refresh time
increases, shown in the red line (after 10K iterations) in
Fig.~\ref{fig:MaxRefreshSimulationDataExample}. Analyzing the paths of multiple
robots show that there is some overall rebalancing in addition to finding new
individual subtours. This corresponds to handling the two aspects of balancing loads
and finding tours, which are both computationally hard in an offline setting.
It can be seen that the LFV-e policy carries out more delicate operations, leading
to a greate amount of initial perturbation. However, it is clear from the aggregated
results shown in Fig.~\ref{fig:MaxRefreshSimulationDataExample} that eventually, LFV-e yields superior results.
Shown is the average maximum $RT_{\Delta_i}(t)$ for larger numbers of
patrolling robots.  The solid black line shows the lower bound for an optimal
set of patrols.  Assuming all of the patrolling robots are on their own
Hamiltonian path, we can compute this lower bound by $\frac{|H(G)|}{r}$, where
$|H(G)|$ is the length of a Hamiltonian cycle of the dual graph $G$ (the number
of triangles), and $r$ is the number of patrolling robots.  Despite of these differences
between LRV-v and LFV-e, it is remarkable that both policies
performed well, with a clearly evident linear speedup according to increasing number $r$
of robots. (Refer to the theoretically optimal lower bound mapped by the black hyperbola.)

Our hardware experimental setup is shown in Fig~\ref{fig:NavPatrol}, and the
data for a experiment is shown in Fig.~\ref{fig:MaxRefreshRobotData}.  The
experiment started with one patrolling robot in the blue triangle.  At 22
minutes into the experiment, we added two more robots, one in the blue
triangle, and the other in the triangle to its left.  The horizontal lines show
the average refresh time in the last 10 minutes of patrolling.  The LRV-v
averages are 984 for one robot and 365 for three, and the LFV-e averages are
750 for one robot and 485 for three.  Units are robot rounds.  These data match
our simulation results nicely.
The experiment starts with one navigating robot; we add
two more robots at 10k iterations, and then another 5 at 20k iterations.  As we
deploy more navigating robots, the maximum $RT_{\Delta_i}(t)$ decreases, seen
in the red line. Note that here, too, LFV-e has the longest initialization before it
discovers all of the triangles, the variance in the max refresh time is the
smallest, once the patrolling routine has settled down.


\section{Conclusion}
\label{sec:Conclusion}

We have demonstrated how a combination of a weak stationary swarm with a set of mobile robots with limited capabilities
can perform well for purposes of patrolling and surveillance of a region. The simple policy LRV (based on purely local information without
any sophisticated communication between devices) allows complete coverage of the mapped area, but may lead to exponential refresh times
for all portions in specific worst-case examples; the alternative policy LFV is of similar simplicity can protect against
this worst-case behavior, provided it is applied to crossed edges of triangular subregions instead of the subregions themselves.
In realistic simulations as well as real-world experiments, both policies perform quite well. Most remarkably, they display
linear speedup for the visiting frequencies of the surveyed subregions, implying that even simple local policies on
weak robots can vastly outperform single, powerful robots with full information and strong computational capabilities.




%
%
%


\bibliographystyle{plain}
\bibliography{lit,bibliography,mclurkin-bibliography}

\end{document}